\let\appendices\relax
\def\argmin{\mathop{\arg\min}\limits}	%
\newcommand{\indicator}{\mathds{1}}
\newcommand{\prl}[1]{\mathopen{}\left(#1\right)\mathclose{}}
\newcommand{\brl}[1]{\mathopen{}\left[#1\right]\mathclose{}}
\newcommand{\crl}[1]{\mathopen{}\left\{#1\right\}\mathclose{}}
\newcommand{\scaleMathLine}[2][1]{\resizebox{#1\linewidth}{!}{$\displaystyle{#2}$}}
\newcommand{\T}{\mathsf{T}}
\newtheorem{proposition}{Proposition}
\newtheorem*{assumption*}{Assumption}
\newtheorem*{remark*}{Remark}
\newtheorem*{problem*}{Problem}
\newtheorem{problem}{Problem}
\title{\LARGE \bf
Search-based Motion Planning for Quadrotors using \\
Linear Quadratic Minimum Time Control
}
\author{Sikang Liu, Nikolay Atanasov, Kartik Mohta, and Vijay Kumar
\thanks{This work is supported in part by ARL \# W911NF-08-2-0004, DARPA \# HR001151626/HR0011516850, ARO \# W911NF-13-1-0350, and ONR \# N00014-07-1-0829. The authors are with the GRASP Laboratory, University of Pennsylvania. Email: \texttt{\{sikang, atanasov, kmohta, kumar\}@seas.upenn.edu}}
}
\begin{document}
\maketitle
\begin{abstract}
In this work, we propose a search-based planning method to compute dynamically feasible trajectories for a quadrotor flying in an obstacle-cluttered environment. Our approach searches for smooth, minimum-time trajectories by exploring the map using a set of short-duration motion primitives. The primitives are generated by solving an optimal control problem and induce a finite lattice discretization on the state space which can be explored using a graph-search algorithm. The proposed approach is able to generate resolution-complete (i.e., optimal in the discretized space), safe, dynamically feasibility trajectories efficiently by exploiting the explicit solution of a Linear Quadratic Minimum Time problem. It does not assume a hovering initial condition and, hence, is suitable for fast online re-planning while the robot is moving. Quadrotor navigation with online re-planning is demonstrated using the proposed approach in simulation and physical experiments and comparisons with trajectory generation based on state-of-art quadratic programming are presented.
\end{abstract}

\section{Introduction}
\label{sec:intro}

Smooth trajectories obtained by minimizing jerk or snap have been widely used to control differentially flat dynamical systems such as quadrotors~\cite{mellingerICRA2011,hehn2011quadrocopter,mueller2015}. These trajectories are represented via time-parameterized polynomials, which converts the trajectory generation problem into one of finding polynomial coefficients that satisfy certain constraints. Recent work exploring time-optimal trajectory generation includes~\cite{bouktir2008,Jamieson2016}. If additionally, obstacle avoidance is added as a consideration, the trajectory generation problem becomes more challenging. While mixed integer optimization techniques~\cite{mellingerICRA2012,deitsICRA2015} handle collisions reliably, they suffer from high computational costs. Recent work demonstrated practical application of quadratic programming~\cite{richter2016polynomial,liu2016high,liu2017plan,chen2016online} to derive collision-free trajectories in real-time. These methods separate the trajectory generation problem in two parts: (i) planning a collision-free geometric path and (ii) optimizing it locally to obtain a dynamically-feasible time-parametrized trajectory. In this way, one can solve for a locally optimal trajectory with respect to a given \textit{time allocation}. However, the prior geometric path restricts the generated trajectory to be inside a given homology class which may not contain a globally optimal (or even feasible) trajectory (Fig.~\ref{fig: sim}).

\begin{figure}[t]
\centering
    \includegraphics[width=0.8\linewidth]{./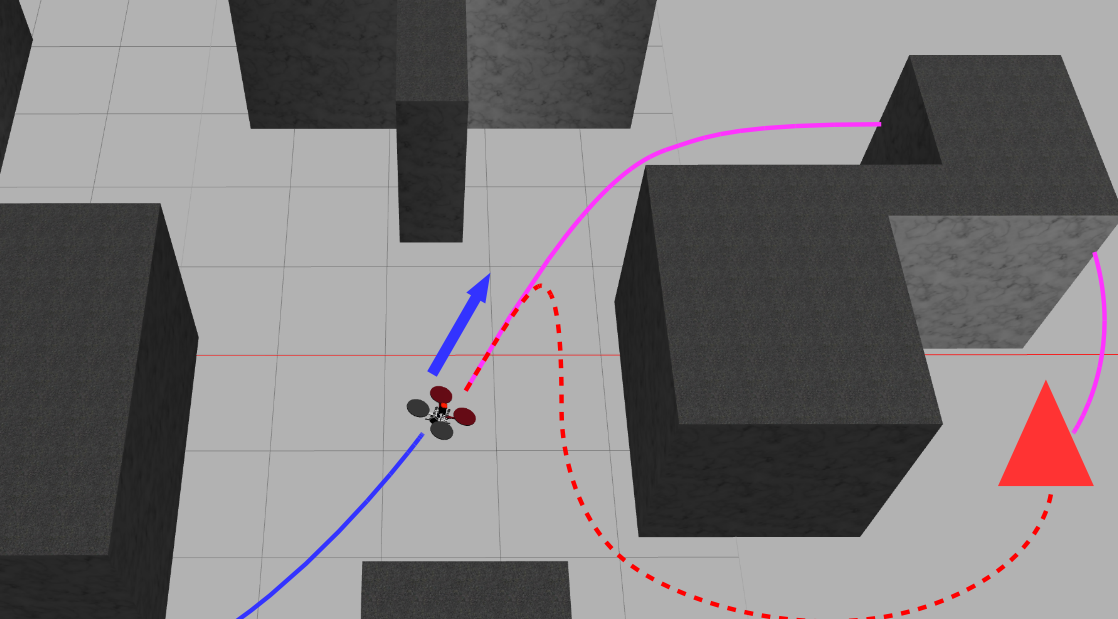}
     \caption{Taking the quadrotor dynamics into account is important for obtaining a smooth trajectory (magenta) while flying at non-zero velocity towards a goal (red triangle). In contrast, existing methods generate a trajectory (red dashed curve) from a shortest path that ignores the system dynamics. Instead of relying on a prior shortest path, the approach proposed in this paper plans globally-optimal trajectories based on time and control efforts. \label{fig: sim}}
\end{figure}

This paper proposes an approach for global trajectory optimization that obtains collision-free, dynamically-feasible, minimum-time, smooth trajectories in real time. Instead of using a geometric path as a prior, our approach explores the space of trajectories using a set of short-duration motion primitives generated by solving an optimal control problem. We prove that the primitives induce a finite lattice discretization on the state space, which can in turn be explored using a graph-search algorithm. It is well-known that the graph search in high-dimensional state spaces is not computationally efficient because there are many states to be explored. However, with the help of a tight lower bound (heuristic) on the optimal cost we can inform and significantly accelerate the search. The main contribution of this paper can be concluded as: 
\begin{enumerate}
\item generation of motion primitives that convert an optimal control problem to graph search
\item a search heuristic(s) based on the explicit solution of a Linear Quadratic Minimum Time problem
\end{enumerate}


In contrast with previous works based on motion primitives like~\cite{likhachev2009planning},~\cite{macallister2013path},~\cite{pivtoraikoICRA2013}, our approach does not require a big precomputed look-up table to find connections between different graph nodes. To reduce the run time, we propose to plan a trajectory in a lower dimension state space and refine a final trajectory that is executable by quadrotors through an unconstrained quadratic programming. We also show that our method generates smoother trajectories compared to the traditional path-based trajectory generation approaches. We demonstrate that our approach can be used for online re-planning during fast quadrotor navigation in various cluttered environments. The the code used in this work is open sourced on \url{https://github.com/sikang/motion_primitive_library}.

\section{Problem Formulation}
\label{sec:problem}
Let $x(t) \in \mathcal{X} \subset \mathbb{R}^{3n}$ be a dynamical system state, consisting of 3-D position and its $(n-1)$ derivatives (velocity, acceleration, jerk, etc.). Let $\mathcal{X}^{free}\subset\mathcal{X}$ denote free region of the state space that, in addition to capturing the obstacle-free positions $\mathcal{P}^{free}$, also specifies constraints $\mathcal{D}^{free}$ on the system's dynamics, i.e., maximum velocity $v_{max}$, acceleration $a_{max}$, and higher order derivatives in each axis. Note that $\mathcal{P}^{free}$ is bounded by the size of the map that we are planning in. Thus, $\mathcal{X}^{free} := \mathcal{P}^{free} \times \mathcal{D}^{free} = \mathcal{P}^{free} \times [-v_{max},v_{max}]^3 \times [-a_{max},a_{max}]^3 \times \ldots$. Denote the obstacle region as $\mathcal{X}^{obs}:=\mathcal{X} \setminus \mathcal{X}^{free}$.

As described in \cite{mellinger2012trajectory} and many other related works, the differential flatness of quadrotor systems allow us to construct control inputs from 1-D time-parametrized polynomial trajectories specified independently in each of the three position axes. Thus, we consider polynomial state trajectories $x(t):= [p_D(t)^\T,\dot{p}_D(t)^\T,\ldots,p_D^{(n-1)}(t)^\T]^\T$, where
\begin{equation}
\label{eq:poly}
p_D(t) := \sum_{k=0}^{K} d_k\frac{t^k}{k!} = d_K\frac{t^K}{K!} + \ldots + d_1 t + d_0 \in \mathbb{R}^3
\end{equation}
and $D := \brl{d_0, \ldots, d_K} \in \mathbb{R}^{3 \times (K+1)}$. To simplify the notation, we denote the system's velocity by $v(t) := \dot{p}_D^\T (t)$, acceleration by $a(t) := \ddot{p}_D^\T (t)$, jerk by $j(t) := \dddot{p}_D^\T(t)$, etc., and drop the subscript $D$ where convenient. Polynomial trajectories of the form Eq.~\eqref{eq:poly} can be generated by considering a linear time-invariant dynamical system $p_D^{(n)}(t) = u(t)$, where the control input is $u(t) \in \mathcal{U} := [-u_{max},u_{max}]^3 \subset \mathbb{R}^3$. In state space form, we obtain a system as
\begin{align}
\label{eq:sys}
\dot{x} &= Ax+Bu\nonumber\\
A &= \begin{bmatrix}
    \mathbf{0} & \mathbf{I}_3 & \mathbf{0} & \cdots &\mathbf{0}\\
    \mathbf{0} & \mathbf{0} & \mathbf{I}_3 & \cdots &\mathbf{0}\\
    \vdots& \ddots&\ddots &\ddots&\vdots\\
    \mathbf{0}& \cdots & \cdots & \mathbf{0} & \mathbf{I}_3\\
    \mathbf{0}& \cdots & \cdots & \mathbf{0} & \mathbf{0}\\
    \end{bmatrix}, \quad B = \begin{bmatrix} \mathbf{0} \\ \mathbf{0}\\\vdots\\\mathbf{0}\\\mathbf{I}_3\end{bmatrix}
\end{align}
We are interested in planning state trajectories that are \textit{collision-free}, respect the constraints on the dynamics, and are \textit{minimum-time} and \textit{smooth}. We define the \textit{smoothness} or \textit{effort} of a trajectory as the square $L^2$-norm of the control input $u(t)$:
\begin{equation}
\label{eq: J}
J(D) := \int_0^T \left\|u(t)\right\|^2 dt = \int_0^T \left\|p_D^{(n)}(t)\right\|^2 dt
\end{equation}
and consider the following problem.

\begin{problem}
\label{prob:1}
Given an initial state $x_0 \in \mathcal{X}^{free}$ and a goal region $\mathcal{X}^{goal} \subset \mathcal{X}^{free}$, find a polynomial trajectory parametrization $D\in \mathbb{R}^{3 \times (K+1)}$ and a time $T\geq 0$ such that:
\begin{equation}
  \label{eq:problem1}
  \begin{gathered}
    \min_{D,T} \; J(D) + \rho T\\
    \begin{aligned}
      \text{s.t.}\;&\dot{x}(t) = Ax(t)+Bu(t), \quad \forall \, t \in [0, T]\\
      &x(0) = x_0, \quad x(T) \in \mathcal{X}^{goal}\\
      &x(t) \in \mathcal{X}^{free},\quad u(t)\in{\mathcal{U}}, \quad \forall \, t \in [0, T]
    \end{aligned}
  \end{gathered}
\end{equation}
where the parameter $\rho \geq 0$ determines the relative importance of the trajectory duration $T$ versus its smoothness $J$.
\end{problem}

In the remainder, we denote the optimal cost from an initial state $x_0$ to a goal region $\mathcal{X}^{goal}$ by $C^*\prl{x_0,\mathcal{X}^{goal}}$. The reason for choosing such an objective function is illustrated in Fig.~\ref{fig: trajs}. This problem is a Linear Quadratic Minimum-Time problem~\cite{LQMT} with state constraints, $x(t) \in \mathcal{X}^{free}$, and input constraints, $u(t)\in{\mathcal{U}}$. As the derivation in Sec.~\ref{sec:heur} shows, if we drop the constraints $x(t) \in \mathcal{X}^{free},u(t)\in{\mathcal{U}}$, the optimal solution can be obtained via Pontryagin's minimum principle~\cite{LQMT,lewisBook} and the optimal choice of polynomial degree is $K=2n-1$. The main challenge is the introduction of the constraints $x(t) \in \mathcal{X}^{free},u(t)\in{\mathcal{U}}$. In this paper, we show that these safety constraints can be handled by converting the problem to a deterministic shortest path problem~\cite[Ch.2]{OptimalControlBook} with a $3n$ dimensional state space $\mathcal{X}$ and a $3$ dimensional control space $\mathcal{U}$. Since the control space $\mathcal{U}$ is always $3$ dimensional, a search-based planning algorithm such as $A^*$~\cite{ara_star} that discretizes $\mathcal{U}$ using \textit{motion primitives} is efficient and resolution-complete (i.e., it can compute the optimal trajectory in the discretized space in finite-time, unlike sampling-based planners such as RRT~\cite{Lavalle98rapidly-exploringrandom,rrt-star}).

\begin{figure}[t]
  \subfigure[$T = 4, J = 19$.]{\includegraphics[width=0.32\columnwidth, trim=0 0 0 0, clip=true]{./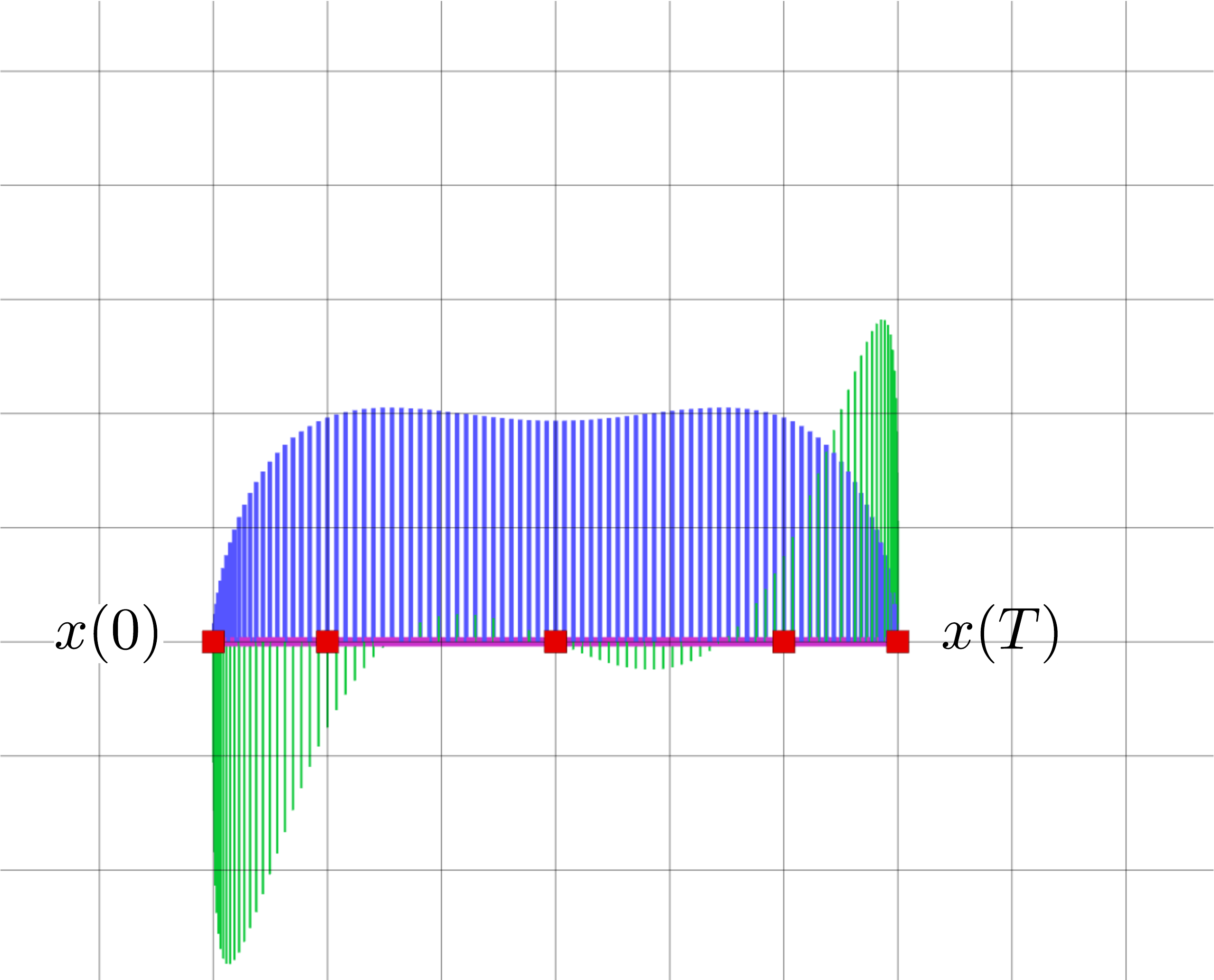}\label{fig: traj1}}
  \subfigure[$T = 4, J = 48$.]{\includegraphics[width=0.32\columnwidth, trim=0 0 0 0, clip=true]{./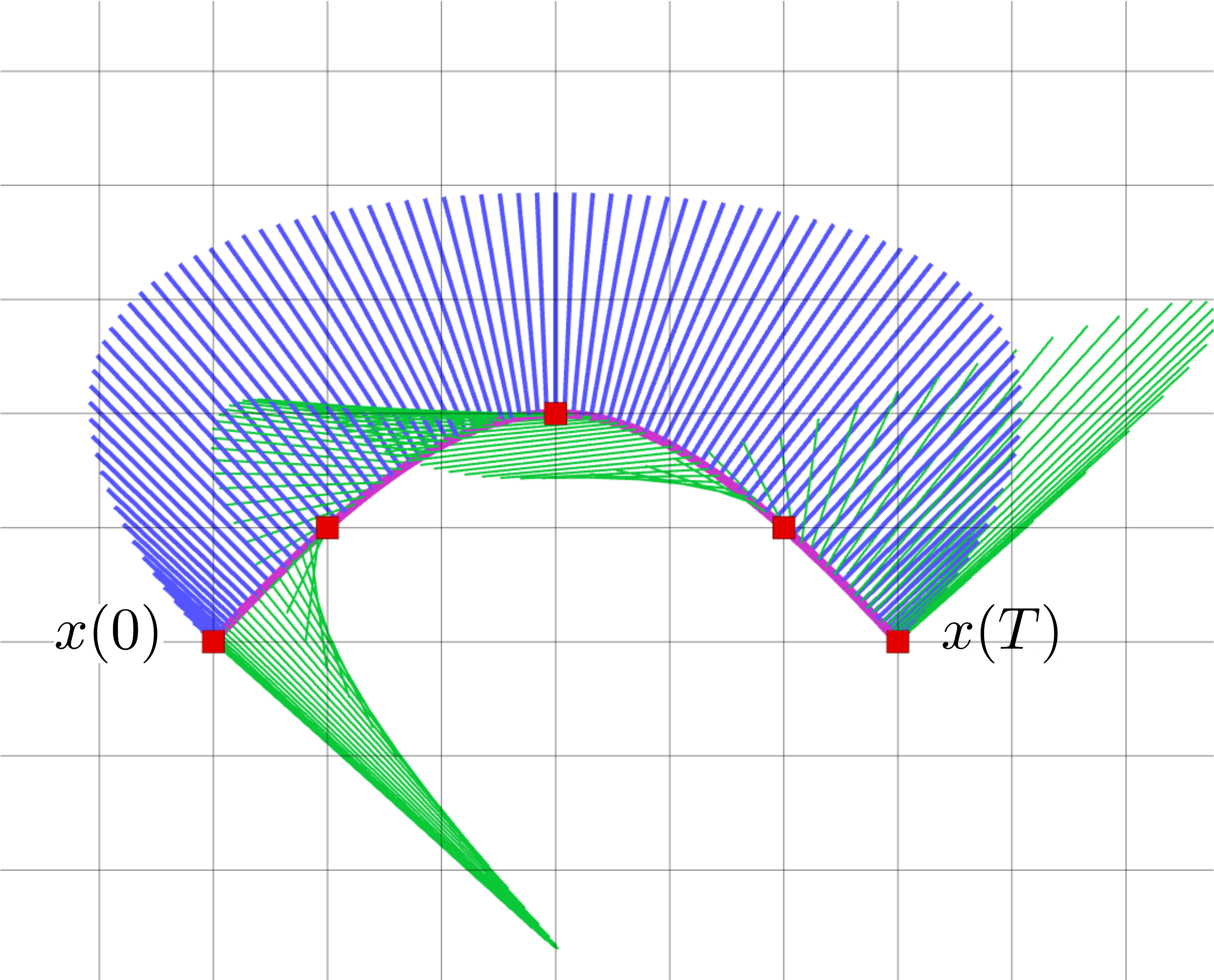}\label{fig: traj2}}
  \subfigure[$T = 7, J = 5$.]{\includegraphics[width=0.32\columnwidth, trim=0 0 0 0, clip=true]{./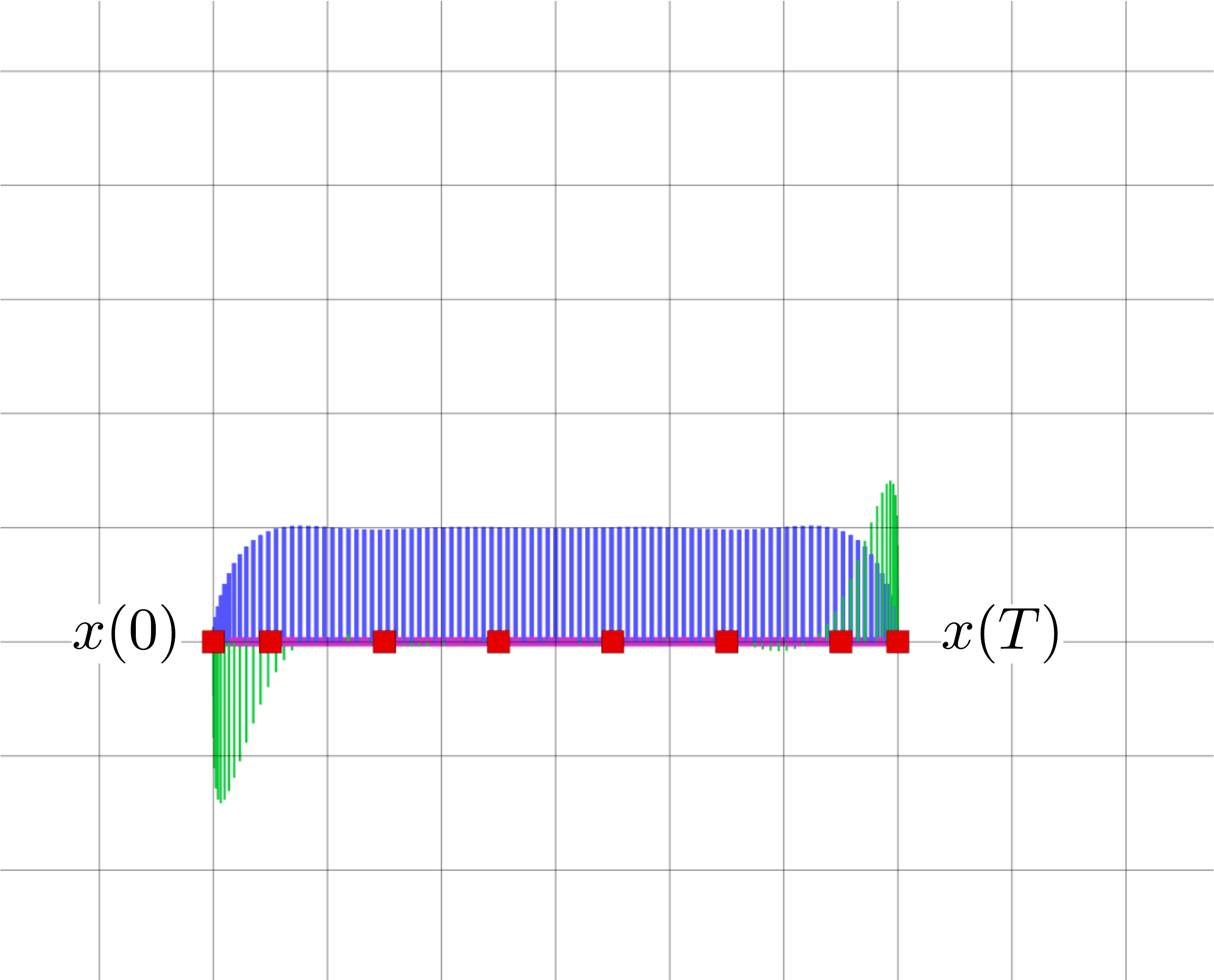}\label{fig: traj3}}
   \caption{Three trajectories start from $x(0)$ to $x(T)$. Blue and green rays indicate the magnitude of velocity and acceleration along trajectories respectively. If the effort $J$ is disregarded, i.e.\ $\rho \rightarrow \infty$ in Eq.~\eqref{eq:problem1}, trajectories (a) and (b) have equivalent cost of $T = 4$. If the time $T$ is not considered, i.e.\ $\rho=0$, trajectory (c) become optimal. Since we are interested in low-effort trajectories, $\rho$ should not be infinite (so that (a) is preferable to (b)) but it should still be large enough to prioritize fast trajectories. Thus, in this comparison, (a) is preferable to both (b) and (c). \label{fig: trajs}}
\end{figure}
\section{Optimal Trajectory Planning}
\label{sec:planning}

\subsection{Motion Primitives}
First, we discuss the construction of motion primitives for the system in Eq.~\eqref{eq:sys} that will allow us to convert Problem~\ref{prob:1} from an optimal control problem to a graph-search problem. Instead of using the control set $\mathcal{U}$, we consider a lattice discretization~\cite{pivtoraiko2009differentially} $\mathcal{U}_M:=\{u_1,\ldots,u_M\} \subset \mathcal{U}$, where each control $u_m \in \mathbb{R}^3$ vector will define a motion of short duration for the system. One way to obtain the discretization $\mathcal{U}_M$ is to choose a number of samples $\mu\in \mathbb{Z}^+$ along each axis $[0,u_{max}]$, which defines a discretization step $d_u := \frac{u_{max}}{\mu}$ and results in $M = (2\mu+1)^3$ motion primitives. Given an initial state $x_0 := [p_0^\T,\;v_0^\T,\;a_0^\T,\ldots]^\T$, we generate a motion primitive of duration $\tau > 0$ that applies a constant control input $u(t) \equiv u_m \in \mathcal{U}_M$ for $t \in [0,\tau]$ so that:
\[
u(t) = p_D^{(n)}(t) = \sum_{k=0}^{K-n} d_{k+n} \frac{t^k}{k!} \equiv u_m.
\]
The control input being constant, implies that all coefficients that involve time need to be identically zero, i.e.:
\[
  d_{(n+1):K} = \mathbf{0} \implies u_m = d_n
\]
Integrating the control expression $u(t) = u_m$ with an initial condition $x_0$ results in
\[
p_D(t) = u_m \frac{t^n}{n!} + \ldots + a_0 \frac{t^2}{2} + v_0t + p_0
\]
or, equivalently, the resulting trajectory of the linear time-invariant system in Eq.~\eqref{eq:sys} is:
\[
  x(t) = \underbrace{e^{At}}_{F(t)} x_0 + \underbrace{\brl{\int_{0}^t e^{A(t-\sigma)}Bd\sigma}}_{G(t)} u_m
\]
An example of the resulting system trajectories is given in Fig.~\ref{fig:prs}.
\begin{figure}[htp]
  \centering
  \subfigure[Discretized Acceleration.]{\includegraphics[width=0.4\columnwidth, trim=0 0 0 0, clip=true]{./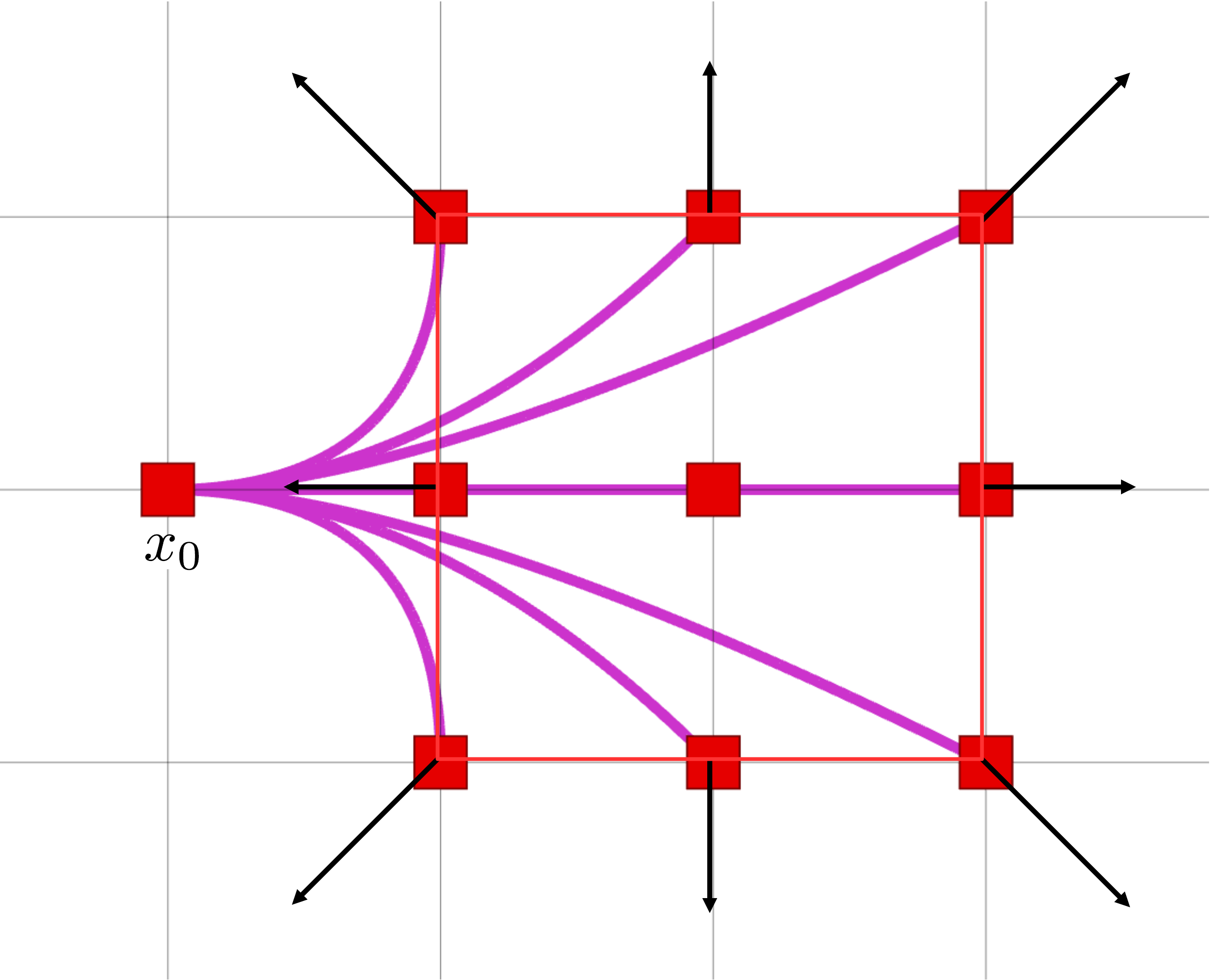}\label{fig: vel}}
  \subfigure[Discretized Jerk.]{\includegraphics[width=0.4\columnwidth, trim=0 0 0 0, clip=true]{./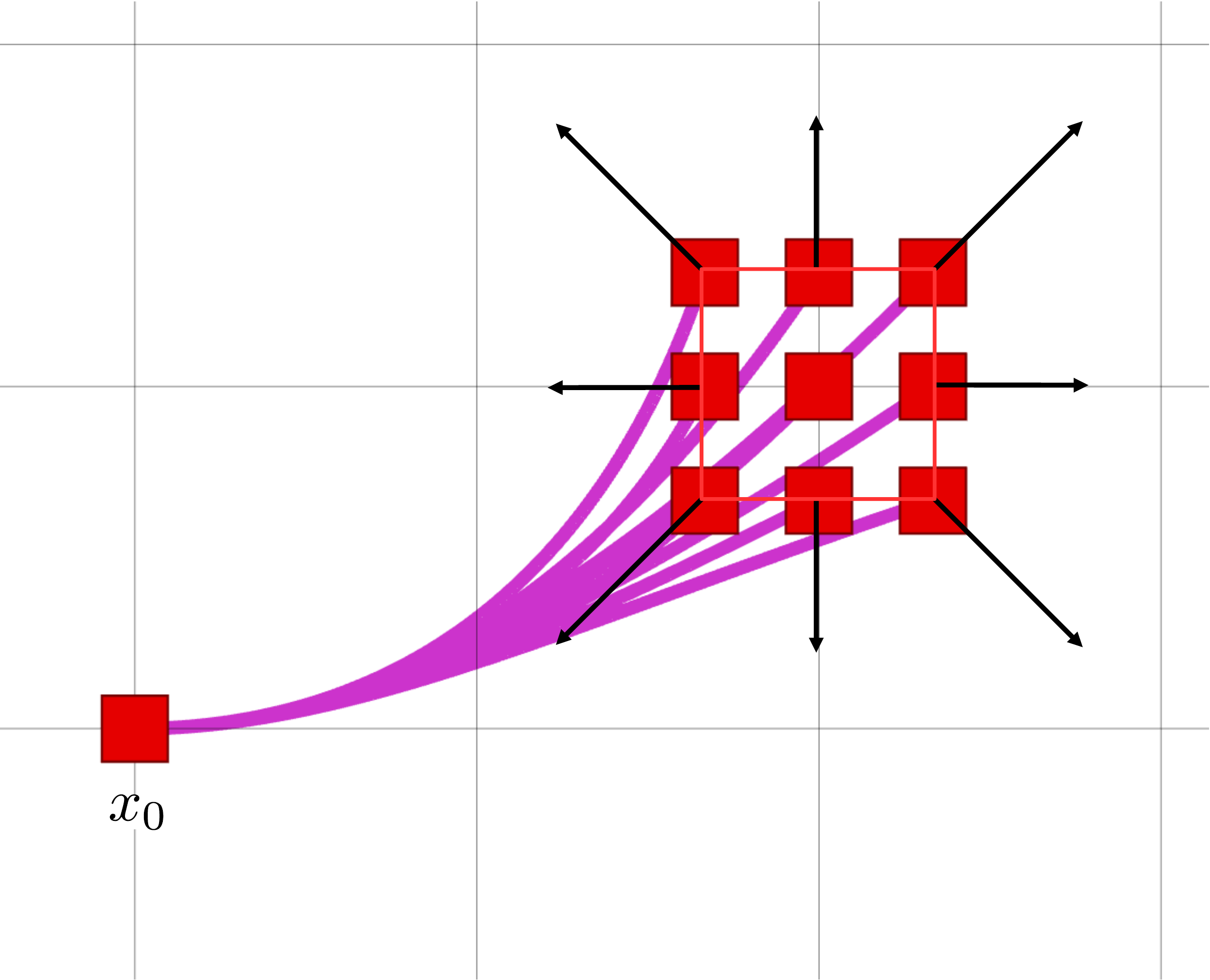}\label{fig: acc}}
   \caption{Example of 9 planar motion primitives from initial state $x_0$ for an acceleration-controlled ($n=2$) system (left) and a jerk-controlled ($n=3$) system (right). The black arrow indicates correpsonding control input. The red boundary shows the feasible region for the end states (red squares), which is induced by the control limit $u_{max}$. The initial velocity and acceleration are $v_0 = [1, 0, 0]^\T$ and $a_0 = [0, 1, 0]^\T$ (only for the right figure).\label{fig:prs}}
\end{figure}
Since both the duration $\tau$ and the control input $u_m$ are fixed, the cost of the motion primitive according to Eq.~\eqref{eq:problem1} is $\prl{\|u_m\|^2+\rho}\tau \label{eq:C}$

\subsection{Induced Space Discretization}
\label{sec:dis}

\begin{proposition}
  \label{prop:discretization}
  The motion primitives defined in the previous section induce a discretization on the state space $\mathcal{X}$.
\end{proposition}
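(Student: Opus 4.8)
The plan is to show that, starting from the initial state $x_0$, the set of states reachable by concatenating motion primitives lies on a fixed lattice up to a depth‑dependent drift, so that the continuum $\mathcal X$ is effectively replaced by a discrete set of nodes on which graph search is well defined. A single primitive of duration $\tau$ with control $u_m\in\mathcal U_M$ sends $x\mapsto F(\tau)x+G(\tau)u_m$, so a sequence of $N$ primitives with controls $u_{m_1},\dots,u_{m_N}$ reaches
\[
x_N \;=\; F(\tau)^N x_0 \;+\; \sum_{k=1}^{N} F(\tau)^{\,N-k} G(\tau)\,u_{m_k}.
\]
Since $A$ in Eq.~\eqref{eq:sys} is nilpotent of index $n$, $F(\tau)^{\,j}=e^{jA\tau}=\sum_{r=0}^{n-1}\frac{(j\tau)^r}{r!}A^r$ is a matrix polynomial in $j$; combining it with the explicit $G(\tau)$ (whose $i$-th block is $\tfrac{\tau^{\,n-i}}{(n-i)!}\mathbf I_3$), a short block computation gives that the $i$-th block of $F(\tau)^{\,j}G(\tau)$ equals $\tfrac{\tau^{\,n-i}}{(n-i)!}\bigl[(j+1)^{\,n-i}-j^{\,n-i}\bigr]\mathbf I_3$, with block $0$ the position.

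Next I would identify the lattice. By construction $\mathcal U_M\subset d_u\mathbb Z^3$, so every $u_{m_k}$ has integer coordinates in units of $d_u$, and the controlled term above lies in the $\mathbb Z$-span $\Lambda$ of the columns of $d_u\,F(\tau)^{\,j}G(\tau)$ over $j=0,1,2,\dots$. The three position axes decouple, so it suffices to treat one axis, where the generators are the vectors $g^{(j)}\in\mathbb R^{n}$ with entries $g^{(j)}_i=d_u\tfrac{\tau^{\,n-i}}{(n-i)!}\bigl[(j+1)^{\,n-i}-j^{\,n-i}\bigr]$. Each entry of $g^{(j)}$ is a polynomial in $j$ of degree at most $n-1$, so the sequence $\{g^{(j)}\}_{j\ge 0}$ is annihilated by the integer difference operator $(\mathrm{shift}-1)^{n}$, i.e.\ $g^{(j+n)}=-\sum_{r=0}^{n-1}(-1)^{n-r}\binom{n}{r}g^{(j+r)}$; hence $\Lambda$ is generated by the finite family $g^{(0)},\dots,g^{(n-1)}$ and is independent of $N$. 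Those $n$ vectors are the columns of the per-axis controllability matrix of the pair $(F(\tau),G(\tau))$, which has full rank because the nilpotency of $A$ excludes the exceptional sampling periods that can destroy controllability; so $\Lambda$ is a genuine full‑rank lattice in the state space.

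I would then conclude: for each fixed depth $N$, every state reachable in $N$ primitives lies on the single coset $F(\tau)^N x_0+\Lambda$, so the reachable set sits in $\bigcup_{N\ge 0}\bigl(F(\tau)^N x_0+\Lambda\bigr)$ — a discrete subset of $\mathcal X$ — and two primitive sequences yielding the same integer control combination modulo $\Lambda$ land on the very same state. This is the discretization on the state space $\mathcal X$ asserted by the proposition, and it is what lets a graph search such as $A^{*}$ prune and revisit nodes instead of unrolling a tree.

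The one delicate point I expect is the drift $F(\tau)^N x_0$ across different depths: its entries are polynomials in $N$ whose coefficients involve the higher‑order components $v_0,a_0,\dots$ of $x_0$, so a priori the cosets $F(\tau)^N x_0+\Lambda$ need not form a discrete family. This is resolved by observing $F(\tau)\Lambda\subseteq\Lambda$ (indeed $F(\tau)g^{(j)}=g^{(j+1)}\in\Lambda$), so that once the non‑position part of the root state is taken on the induced lattice — the natural initialization — the union collapses to one coset; alternatively, since every primitive has cost $(\lVert u_m\rVert^2+\rho)\tau>0$, only finitely many depths $N$ are ever expanded under any finite cost bound and there the reachable set is trivially discrete. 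Pinning down this commensurability, rather than the core block identity, is where the care is needed.
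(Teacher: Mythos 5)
Your proposal is correct and follows essentially the same route as the paper's proof in App.~A: expand the $k$-step state as $F^k(\tau)x_0+\sum_{i}F^i(\tau)G(\tau)u_{k-i}$, compute the blocks of $F^i(\tau)G(\tau)$ explicitly, and use $u\in d_u\mathbb{Z}^3$ to conclude the controlled part lies on a lattice. The paper stops earlier than you do --- it simply collects coefficients per derivative block so that each controlled coordinate is manifestly an integer multiple of the constant $d_u\tau^m/m!$, and it leaves the drift term $F^k(\tau)x_0$ untreated --- so your additional steps (finite generation of $\Lambda$ via the $(\mathrm{shift}-1)^n$ recurrence, the controllability/full-rank remark, and the commensurability discussion of the drift) are extra care beyond, and if anything tighter than, the published argument.
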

\begin{proof}
See App.~\ref{app:a}.
\end{proof}

This discretization of the state space allows us to construct a graph representation of the reachable system states by starting at $x_0$ and applying all primitives to obtain the $M$ possible states after a duration of $\tau$ (see Fig.~\ref{fig:prs} and Alg.~\ref{alg:succ}). Applying all possible primitives to each of the $M$ states again, will result in $M^2$ possible states at time $2\tau$. Since the free space $\mathcal{X}^{free}$ is bounded and discretized, the set of reachable states $\mathcal{S}$ is finite.

This defines a graph $\mathcal{G}(\mathcal{S}, \mathcal{E})$, where $\mathcal{S}$ is the discrete set of reachable system states and $\mathcal{E}$ is the set of edges that connect states in the graph, each defined by a motion primitive $e := (u_m,\tau)$. Let $s_0$ be the state corresponding to $x_0$.

We use Algorithm~\ref{alg:succ} to explore the free state space $\mathcal{X}^{free}$ and build the connected graph: in line 4, the primitive is calculated using the fully defined state $s$ and a control input $u_m$ given the constant time $\tau$; line 5 checks the feasibility of the primitive, this step will be further discussed in Section.~\ref{sec:col}; in line 6, we evaluate the end state of a valid primitive and add it to the set of successors of the current node; in the meanwhile, we estimate the edge cost from the corresponding primitive. After checking through all the primitives in the finite control input set, we add the nodes in successor set $\mathcal{R}(s)$ to the graph, and we continue expanding until we reach the goal region.
\begin{algorithm}[t]
\caption{Given $s\in \mathcal{S}$ and a motion primitive set $\mathcal{U}_M$ with duration $\tau$, find the states $\mathcal{R}(s)$ that are reachable from $s$ in one step and their associated costs $\mathcal{C}(s)$. \label{alg:succ}}
\begin{algorithmic}[1]
\Function{GetSuccessors}{$s,\mathcal{U}_M, \tau$}
\State $\mathcal{R}(s) \gets \emptyset, \quad \mathcal{C}(s) \gets \emptyset$
\ForAll{$u_m \in \mathcal{U}_M$}
\State $e_m(t) \gets F(t)s + G(t)u_m, \quad t\in[0,\tau]$
\If {$e_m(t) \subset \mathcal{X}^{free}$}
\State $s_m \gets e_m(\tau)$
\State $\mathcal{R}(s) \gets \mathcal{R}(s) \cup \{s_m\}$
\State $\mathcal{C}(s) \gets \mathcal{C}(s) \cup \{ \prl{\|u_m\|^2+\rho}\tau\}$
\EndIf
\EndFor
\State\Return {$\mathcal{R}(s),\mathcal{C}(s)$}
\EndFunction
\end{algorithmic}
\end{algorithm}

\begin{proposition}\label{prop:2}
The motion primitive $u_{ij} \in \mathcal{U}_M$ which connects two consecutive states $s_i,s_j \in \mathcal{S}$ with $s_j = F(\tau)s_i + G(\tau) u_{ij}$ is optimal according to the cost function in Eq.~\eqref{eq:problem1}.
\end{proposition}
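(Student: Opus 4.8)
The plan is to show that, with the duration $\tau$ and the endpoints $s_i,s_j$ held fixed, the constant-control primitive $u_{ij}$ is the minimum-effort transfer between $s_i$ and $s_j$, and that for a graph edge (which by construction encodes a transfer of duration exactly $\tau$) this coincides with being optimal for the cost in Eq.~\eqref{eq:problem1}. First I would note that, among all controls $u:[0,\tau]\to\mathbb{R}^3$ obeying the dynamics~\eqref{eq:sys} with $x(0)=s_i$ and $x(\tau)=s_j$, the term $\rho\tau$ in Eq.~\eqref{eq:problem1} is a constant, so the cost is minimized precisely when the effort $J=\int_0^\tau\|u(t)\|^2\,dt$ is minimized. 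Moreover, since $u_{ij}\in\mathcal{U}_M\subset\mathcal{U}$ and the primitive has already passed the check $e(t)\subset\mathcal{X}^{free}$ in Alg.~\ref{alg:succ}, optimality within the \emph{unconstrained} transfer problem implies optimality within the state- and input-constrained one; hence it suffices to treat the unconstrained minimum-effort transfer.

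Next I would apply Pontryagin's minimum principle to this transfer, just as the paper does for the unconstrained LQMT problem. With Hamiltonian $H(x,u,\lambda)=\|u\|^2+\lambda^\T(Ax+Bu)$, pointwise minimization in $u$ yields $u^*(t)=-\tfrac{1}{2} B^\T\lambda(t)$, while the costate satisfies $\dot\lambda=-A^\T\lambda$ with $\lambda(0),\lambda(\tau)$ free since both endpoints are fixed. Because the running cost is convex in $u$, the dynamics are linear, and the boundary conditions are affine, these first-order conditions are \emph{sufficient}, not merely necessary, for global optimality; strict convexity of $J$ on $L^2[0,\tau]$ additionally makes the optimal control unique.

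Then I would exhibit a costate reproducing the constant primitive. Take $\lambda(t)\equiv\bar\lambda$, the $3n$-vector whose last $3$-block equals $-2u_{ij}$ and whose other blocks vanish. Using the super-diagonal chain structure of $A$ in Eq.~\eqref{eq:sys}, a one-line computation gives $A^\T\bar\lambda=\mathbf{0}$, so $\dot\lambda=\mathbf{0}=-A^\T\lambda$ holds, and $B^\T\bar\lambda=-2u_{ij}$ gives $u^*(t)=-\tfrac{1}{2} B^\T\bar\lambda=u_{ij}$; the corresponding trajectory $x(t)=F(t)s_i+G(t)u_{ij}$ satisfies the dynamics and, by the very definition of the successor $s_j$, the boundary conditions. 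Thus $(u_{ij},\bar\lambda)$ solves the PMP optimality system, and by the sufficiency argument above $u_{ij}$ is the (unique) minimum-effort, hence minimum-cost, control from $s_i$ to $s_j$ over $[0,\tau]$.

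I expect the only genuine subtlety to be the passage from necessary to sufficient conditions, which rests on the convexity of the transfer problem (and on correctly reading ``optimal'' as optimality relative to transfers between the \emph{given} pair of states in the \emph{given} duration $\tau$); the reduction in step one and the verification $A^\T\bar\lambda=\mathbf{0}$ are routine. As a sanity check on intuition, for an \emph{arbitrary} target the optimal control is the degree-$(n-1)$ polynomial $-\tfrac{1}{2} B^\T e^{-A^\T t}\lambda(0)$, and the constant control is exactly the degenerate case arising when the target equals $F(\tau)s_i+G(\tau)u_{ij}$ for a constant $u_{ij}$ — precisely the states generated as successors in Alg.~\ref{alg:succ}.
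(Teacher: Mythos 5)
Your proof is correct, but it takes a different route from the paper's. The paper treats the edge as an instance of the relaxed LQMT problem it has already solved: it sets $\delta_\tau = s_j - F(\tau)s_i = G(\tau)u_{ij}$, plugs into the closed-form optimal control $u^*(t)=B^\T e^{A^\T(\tau-t)}W_\tau^{-1}\delta_\tau$ from Prop.~\ref{prop:LQMT}, and verifies by a block computation (equivalently, $W_\tau^{-1}G(\tau)$ has only its bottom $3\times 3$ block nonzero and equal to $\mathbf{I}_3$) that this expression is identically $u_{ij}$. You instead bypass Prop.~\ref{prop:LQMT} and the Gramian altogether: you reduce to the fixed-time, fixed-endpoint minimum-effort transfer, exhibit the constant costate $\bar\lambda$ with only its last block nonzero (so that $A^\T\bar\lambda = \mathbf{0}$ and $-\tfrac12 B^\T\bar\lambda = u_{ij}$), and invoke convexity (Mangasarian-type sufficiency) to upgrade the first-order PMP conditions to global optimality and uniqueness. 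The two arguments hinge on the same underlying fact --- a constant control lies in the span of $t\mapsto B^\T e^{A^\T(\tau-t)}$, and your $\bar\lambda$ is exactly $-2W_\tau^{-1}\delta_\tau$ in the paper's notation --- but your version is more self-contained (no Gramian inversion, no reliance on the explicit LQMT formula) and makes explicit two points the paper leaves implicit: why satisfying the first-order conditions suffices, and that ``optimal'' must be read as optimality among transfers between the given endpoints in the given duration $\tau$ (with the $\rho\tau$ term then constant, and feasibility for the constrained problem inherited from the collision and input checks in Alg.~\ref{alg:succ}). The paper's version, in exchange, reuses machinery it has already established and keeps the proof to a short computation.
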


\begin{proof}
See App.~\ref{app:b}.
\end{proof}

\subsection{Deterministic Shortest Trajectory}
Given the set of motion primitives $\mathcal{U}_M$ and the induced space discretization discussed in the previous section, we can re-formulate Problem~\ref{prob:1} as a graph-search problem. This can be done by introducing additional constraints that stipulate that the control input $u(t)$ in Eq.\eqref{eq:problem1} is piecewise-constant over intervals of duration $\tau$. More precisely, we introduce an additional variable $N\in \mathbb{Z}^+$, such that $T = N\tau$, and $u_k \in \mathcal{U}_M$ for $k = 0, \ldots, N-1$ and a constraint in Eq.~\eqref{eq:problem1}:
\[
u(t) = \sum_{k=0}^{N-1} u_k \indicator_{\{t \in [k\tau,(k+1)\tau)\}}
\]
that forces the control trajectory to be a composition of the motion primitives in $\mathcal{U}_M$. This leads to the following deterministic shortest path problem~\cite[Ch.2]{OptimalControlBook}.

\begin{problem}\label{prob:2}
  Given an initial state $x_0 \in \mathcal{X}^{free}$, a goal region $\mathcal{X}^{goal} \subset \mathcal{X}^{free}$, and a finite set of motion primitives $\mathcal{U}_M$ with duration $\tau > 0$, choose a sequence of motion primitives $u_{0:N-1}$ of length $N$ such that:
  \begin{equation}
    \label{eq:problem2}
    \begin{gathered}
      \min_{N, u_{0:N-1}} \; \prl{\sum_{k=0}^{N-1}  \|u_k\|^2 + \rho N}\tau\\
      \begin{aligned}
        \text{s.t.}\;&x_k(\tilde{t}) = F(\tilde{t})s_k+G(\tilde{t})u_k \subset \mathcal{X}^{free}, \quad \tilde{t}\in[0,\tau]\\
        &x_{k}(\tilde{t})\subset \mathcal{X}^{free} \quad \forall \, k \in \crl{0,\ldots,N-1}, \; \tilde{t}\in[0,\tau]\\
        &s_{k+1} = x_k(\tau), \quad \forall \, k \in \crl{0,\ldots,N-1}\\
        &s_0 = x_0, \quad s_{N} \in \mathcal{X}^{goal}\\
        &u_{k} \in \mathcal{U}_M, \quad \forall \, k \in \crl{0,\ldots,N-1}
      \end{aligned}
    \end{gathered}
  \end{equation}
\end{problem}

The optimal cost of Problem~\ref{prob:2} is an upper bound to the optimal cost of Problem~\ref{prob:1} because Problem~\ref{prob:2} is just a constrained version of Problem~\ref{prob:1}. However, this re-formulation to discrete control and state-spaces enables an efficient solution. Such problems can be solved via search-based~\cite{hart1968formal,ara_star} or sampling-based~\cite{Lavalle98rapidly-exploringrandom,rrt-star,arslan2013use} motion planning algorithms. Since only the former guarantees finite-time (sub-)optimality, we use an $A^*$ method and focus on the design of an accurate, consistent heuristic and efficient, guaranteed collision checking methods in following subsections.

\subsection{Heuristic Function Design}
\label{sec:heur}
Devising an efficient graph search for solving Problem~\ref{prob:2} requires an approximation of the optimal cost function, i.e., a heuristic function, that is admissible\footnote{\label{ftn:heur}A heuristic function $h$ is \textit{admissible} if it underestimates the optimal cost-to-go from $x_0$, i.e., $0 \leq h(x_0) \leq C^*\prl{x_0,\mathcal{X}^{goal}}, \forall x_0 \in \mathcal{X}$.}, informative (i.e., provides a tight approximation of the optimal cost), and consistent\footnote{\label{ftn:heur2}A heuristic function $h$ is \textit{consistent} if it satisfies the triangle inequality, i.e., $h(x_0) \leq C^*\prl{x_0,\{x_1\}} + h(x_1), \forall x_0,x_1 \in \mathcal{X}$.} (i.e., can be inflated in order to obtain solutions with bounded suboptimality very efficiently~\cite{ara_star}). Since by construction, the optimal cost of Problem~\ref{prob:2} is bounded below by the optimal cost of Problem~\ref{prob:1}, we can obtain a good heuristic function by solving a relaxed version of Problem~\ref{prob:1}. Our idea is to replace constraints in Eq.~\eqref{eq:problem1} that are difficult to satisfy, namely, $x(t) \in \mathcal{X}^{free}$ and $u(t) \in \mathcal{U}$, with a constraint on the time $T$. In this section, we show that such a relaxation of Problem~\ref{prob:1} can be solved optimally and efficiently.

\subsubsection{Minimum Time Heuristic}
Intuitively, the constraints on maximum velocity, acceleration, jerk, etc. due to $\mathcal{X}^{obs}$ and $\mathcal{U}$ induce a lower bound $\bar{T}$ on the minimum achievable time in \eqref{eq:problem1}. For example, since the system's maximum velocity is bounded by $v_{max}$ along each axis, the minimum time for reaching the closest state $x_f$ in the goal region $\mathcal{X}^{goal}$ is bounded below by $\bar{T}_v:=\frac{\|p_f-p_0\|_\infty}{v_{max}}$. Similarly, since the system's maximum acceleration is bounded by $a_{max}$, the state $x_f:=[p_f^\T,v_f^\T]^\T$ cannot be reached faster than:
\begin{equation*}
  \begin{gathered}
    \min_{\bar{T}_a,a(t)} \; \bar{T}_a \\
    \begin{aligned}
      \text{s.t.}\quad & \|a(t)\| \leq a_{max}, \quad \forall \, t \in [0, T]\\
      & p(0) = p_0, \; v(0) = v_0\\
      &p(\bar{T}_a) = p_f, \; v(\bar{T}_a) = v_f
    \end{aligned}
  \end{gathered}
\end{equation*}
The above is a minimum-time (Brachistochrone) optimal control problem with input constraints, which may be difficult to solve directly in 3-D~\cite{mintime_acc_constraint} but can be solved in closed-form along individual axes~\cite[Ch.5]{lewisBook} to obtain lower bounds $\bar{T}_a^x$, $\bar{T}_a^y$, $\bar{T}_a^z$. This procedure can be continued for the constraint on jerk $j_{max}$ and  those on higher-order derivatives but the problems become more complicated to solve and the computed times are less likely to provide better bounds the previous ones. Hence, we can define a lower bound on the minimum achievable time via $\bar{T}:=\max\{\bar{T}_v,\bar{T}_a^x,\bar{T}_a^y,\bar{T}_a^z,\bar{T}_j,\ldots\}$ but for simplicity we use the easily computable but less tight bound $\bar{T} = \bar{T}_v$.

Hence, to find a heuristic function, we relax Problem~\ref{prob:1} by replacing the state and input constraints, $x(t) \in \mathcal{X}^{free}$ and $u(t) \in \mathcal{U}$, with the lower bound $T \geq \bar{T}_v$:
\begin{equation}
  \label{eq:LQMT}
  \begin{gathered}
    \min_{D,T} \; J(D) + \rho T\\
    \begin{aligned}
      \text{s.t.}\;&\dot{x}(t) = Ax(t)+Bu(t), \quad \forall  t \in [0, T]\\
      &x(0) = x_0, \quad x(T) \in \mathcal{X}^{goal}\\
      & T \geq \bar{T}
    \end{aligned}
  \end{gathered}
\end{equation}
Since $J(D) \geq 0$, a straight-forward way to obtain a lower-bound on the optimal cost is:
\[
C^*\prl{x_0,\mathcal{X}^{goal}} = J(D^*) + \rho T^* \geq \rho \bar{T}_v
\]
Hence, given nodes $s_0,s_f \in \mathcal{S}$ in the discretized space, the following is an admissible heuristic function:
\begin{equation}\label{eq:h1}
h_1(s_0) = \rho \bar{T}_v = \frac{\rho\|p_f-p_0\|_\infty}{v_{max}}
\end{equation}
for Problem~\ref{prob:2}. It is easy to see that it is also \textit{consistent} due to the triangle inequality for distances.

\subsubsection{Linear Quadratic Minimum Time}
While the minimum-time heuristic is very easy to compute and takes velocity constraints into account, it is not a very tight lower bound on the optimal cost in Eq.~\eqref{eq:problem2} because it disregards the control effort. The reason is that instead of solving Eq.~\eqref{eq:LQMT}, we simply found a lower bound in the previous subsection. An important observation is that after removing the constraints $x(t) \in \mathcal{X}^{free}$ and $u(t) \in \mathcal{U}$, the relaxed problem Eq.~\eqref{eq:LQMT} is in fact the classical Linear Quadratic Minimum-Time Problem~\cite{LQMT}. The optimal solution to Eq.~\eqref{eq:LQMT} can be obtained from \cite[Thm.2.1]{LQMT} with a minor modification introducing the additional constraint on time $T \geq \bar{T}$.
\begin{proposition}
\label{prop:LQMT}
Let $x_f\in  \mathcal{X}^{goal}$ be a fixed final state and define $\delta_T:= x_f - e^{AT}x_0$ and the controllability Gramian $W_T:= \int_0^T e^{At} BB^\T e^{A^\T t} dt$. Then, the optimal time $T$ in Eq.~\eqref{eq:LQMT} is either the lower bound $\bar{T}$ or the solution of following equation:
\begin{equation}
\scaleMathLine[0.9]{-\frac{d}{dT} \crl{\delta_T^\T W_T^{-1} \delta_T} = 2x_f^\T A^{\T}W_T^{-1}\delta_T + \delta_T^\T W_T^{-1}BB^\T W_T^{-1}\delta_T = \rho}
\end{equation}
The optimal control is:
\begin{equation}
u^*(t) := B^\T e^{A^\T(T-t)}W_{T}^{-1}\delta_{T}
\end{equation}
While the optimal cost is:
\begin{equation}\label{eq:optimal_cost}
h_2(x_0)=\delta_T^\T W_T^{-1} \delta_T+\rho T
\end{equation}
  The polynomial coefficients $D \in \mathbb{R}^{3 \times (2n)}$ in Eq.~\eqref{eq:poly} are:
\begin{align*}
d_{0:(n-1)} = x_0, \qquad d_{n:(2n-1)} = \delta_T^\T W_T^{-\T}e^{AT}H^\T
\end{align*}
where $H \in \mathbb{R}^{(3n) \times (3n)}$ with $H_{ij} = \begin{cases} (-1)^j, & i=j \\ 0, & i\neq j \end{cases}.$
\end{proposition}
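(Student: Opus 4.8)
The plan is to split the relaxed problem~\eqref{eq:LQMT} into an inner fixed-horizon transfer and an outer one-dimensional search over the final time. First I fix $T>0$ together with the endpoints $x(0)=x_0$ and $x(T)=x_f$, and solve the minimum-energy transfer $\min_u \int_0^T \|u(t)\|^2\,dt$ subject to $\dot{x}=Ax+Bu$. Writing the reachability constraint as $\delta_T = \int_0^T e^{A(T-t)}Bu(t)\,dt$, Pontryagin's minimum principle (exactly as in~\cite[Thm.~2.1]{LQMT},~\cite[Ch.~5]{lewisBook}) gives the adjoint $p(t)=e^{A^\T(T-t)}\nu$, the optimality condition $u^*(t)=-\tfrac{1}{2}B^\T p(t)$, and---after substituting into the endpoint constraint and using that the triple-integrator chain $(A,B)$ is controllable, so $W_T\succ 0$ for every $T>0$---the multiplier $\nu=-2W_T^{-1}\delta_T$. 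This produces $u^*(t)=B^\T e^{A^\T(T-t)}W_T^{-1}\delta_T$ and, by direct substitution, the transfer cost $\int_0^T\|u^*(t)\|^2\,dt=\delta_T^\T W_T^{-1}\delta_T$.

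With the inner problem solved, \eqref{eq:LQMT} collapses to the scalar minimization $\min_{T\ge\bar{T}}\phi(T)$ with $\phi(T):=\delta_T^\T W_T^{-1}\delta_T+\rho T$. Since $\delta_T^\T W_T^{-1}\delta_T\to\infty$ as $T\downarrow 0$ and $\phi(T)\to\infty$ as $T\to\infty$ (for $\rho>0$), a minimizer over $[\bar{T},\infty)$ exists, and it is attained either at the boundary $T=\bar{T}$ or at an interior stationary point $\phi'(T)=0$; this dichotomy is the only change from~\cite[Thm.~2.1]{LQMT}, which imposes no lower bound on $T$. It then remains to (i) compute $\phi'$ and set it to zero, and (ii) read off the polynomial parametrization.

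For (i), I would differentiate $g(T):=\delta_T^\T W_T^{-1}\delta_T$ using $\dot{\delta}_T=-Ae^{AT}x_0=A\delta_T-Ax_f$, $\dot{W}_T=e^{AT}BB^\T e^{A^\T T}$, and $\tfrac{d}{dT}W_T^{-1}=-W_T^{-1}\dot{W}_T W_T^{-1}$, obtaining $g'(T)=2(\dot{\delta}_T)^\T W_T^{-1}\delta_T-\delta_T^\T W_T^{-1}e^{AT}BB^\T e^{A^\T T}W_T^{-1}\delta_T$. The one non-mechanical step---and the main obstacle---is to recognize the identity
\[
e^{AT}BB^\T e^{A^\T T}=BB^\T+AW_T+W_TA^\T,
\]
obtained by integrating $\tfrac{d}{dt}\!\left(e^{At}BB^\T e^{A^\T t}\right)=Ae^{At}BB^\T e^{A^\T t}+e^{At}BB^\T e^{A^\T t}A^\T$ over $[0,T]$. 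Substituting it and collecting terms (each scalar equals its transpose and $W_T=W_T^\T$, so the $A^\T W_T^{-1}$ cross-terms cancel) leaves $-g'(T)=2x_f^\T A^\T W_T^{-1}\delta_T+\delta_T^\T W_T^{-1}BB^\T W_T^{-1}\delta_T$, so that $\phi'(T)=g'(T)+\rho=0$ is precisely the stated equation, and evaluating $\phi$ at the optimal $T$ gives $h_2(x_0)=\delta_T^\T W_T^{-1}\delta_T+\rho T$.

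For (ii), I would use that $A$ is nilpotent ($A^n=0$), so $e^{A^\T(T-t)}=e^{A^\T T}\sum_{k=0}^{n-1}\tfrac{(-t)^k}{k!}(A^\T)^k$ is a polynomial of degree $n-1$ in $t$; hence $u^*(t)=p_D^{(n)}(t)$ has degree $n-1$, $p_D(t)$ has degree $2n-1$, and $K=2n-1$, $D\in\mathbb{R}^{3\times 2n}$. Matching $p_D$ and its first $n-1$ derivatives at $t=0$ against $x_0$ gives $d_{0:(n-1)}=x_0$, while matching the Taylor coefficients of $u^*(t)=\sum_{k\ge 0}(-1)^k\tfrac{t^k}{k!}B^\T e^{A^\T T}(A^\T)^k W_T^{-1}\delta_T$ against $\sum_k d_{k+n}\tfrac{t^k}{k!}$ gives $d_{n+k}=(-1)^k(A^kB)^\T e^{A^\T T}W_T^{-1}\delta_T$; stacking over $k$ and using that $[B,AB,\dots,A^{n-1}B]$ is a block reversal of the identity for the integrator chain collapses this to $d_{n:(2n-1)}=\delta_T^\T W_T^{-\T}e^{AT}H^\T$, with $H$ recording the alternating signs together with the block reversal. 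This last part is routine index bookkeeping; the substance of the proof is the cancellation identity in step (i).
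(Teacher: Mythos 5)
Your proposal is correct, and it is worth noting that the paper itself never proves Proposition~\ref{prop:LQMT}: it simply invokes \cite[Thm.~2.1]{LQMT} ``with a minor modification'' for the constraint $T\geq\bar{T}$. What you have written is essentially a self-contained reconstruction of that cited argument: the inner fixed-$T$ minimum-energy transfer via Pontryagin/controllability giving $u^*(t)=B^\T e^{A^\T(T-t)}W_T^{-1}\delta_T$ with cost $\delta_T^\T W_T^{-1}\delta_T$, followed by the outer scalar minimization of $\phi(T)=\delta_T^\T W_T^{-1}\delta_T+\rho T$ over $[\bar{T},\infty)$, whose minimizer is either the boundary point $\bar{T}$ or an interior stationary point. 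Your differentiation is sound: I checked that the Lyapunov-type identity $e^{AT}BB^\T e^{A^\T T}=BB^\T+AW_T+W_TA^\T$ together with $\dot{\delta}_T=A\delta_T-Ax_f$ does make the $\delta_T^\T A^\T W_T^{-1}\delta_T$ terms cancel, leaving exactly $-\phi'(T)+\rho=2x_f^\T A^\T W_T^{-1}\delta_T+\delta_T^\T W_T^{-1}BB^\T W_T^{-1}\delta_T$, i.e.\ the stationarity condition in the proposition; the existence of a minimizer for $\rho>0$ is as you say (for $\rho=0$ the infimum can escape to $T\to\infty$, a boundary case the paper ignores as well). Your coefficient extraction $d_{n+k}=(-1)^k\prl{A^kB}^\T e^{A^\T T}W_T^{-1}\delta_T$ is also correct --- it reproduces the paper's acceleration-control example ($d_2=\frac{6}{T^2}\alpha-\frac{2}{T}\beta$, $d_3=-\frac{12}{T^3}\alpha+\frac{6}{T^2}\beta$). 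The one caveat is the final packaging: as literally defined in the statement, $H$ is a scalar diagonal sign matrix, which by itself does not produce the block reversal coming from $[B,AB,\ldots,A^{n-1}B]$; you correctly observe that the reversal is needed, so the looseness here lies in the proposition's notation rather than in your argument, and it does not affect the heuristic $h_2$, which is the quantity actually used downstream.
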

Thus, the optimal cost $h_2(x_0)$ obtained in Prop.~\ref{prop:LQMT} is a better heuristic for Problem~\ref{prob:2} than $h_1$ because $h_2$ takes the control efforts into account. It is also admissible by construction because the optimal cost of Problem~\ref{prob:2} is lower bounded by the optimal cost of Problem~\ref{prob:1}, which in turn is lower bounded by $h_2(x_0)$. Below, we give examples of the results in Prop.~\ref{prop:LQMT} for several practical cases with a given $T$.

\paragraph{Velocity Control}
Let $n=1$ so that $\mathcal{X} \subset \mathbb{R}^3$ is position space and $\mathcal{U}$ is velocity space. Then, the optimal solution to Eq.~\eqref{eq:LQMT} according to Prop.~\ref{prop:LQMT} is:
\begin{align*}
d_1 &= \frac{1}{T}\prl{p_f-p_0}\\
x^*(t) &= d_1t + p_0,\;u^*(t) = d_1\\
C^* &= \frac{1}{T} \|p_f-p_0\|^2 + \rho T
\end{align*}
\paragraph{Acceleration Control}
Let $n=2$ so that $\mathcal{X} \subset \mathbb{R}^6$ is position-velocity space and $\mathcal{U}$ is acceleration space. Then, the optimal solution to Eq.~\eqref{eq:LQMT} according to Prop.~\ref{prop:LQMT} is:
\begin{align*}
\begin{pmatrix}
d_3\\d_2
\end{pmatrix} &= \begin{bmatrix}
-\frac{12}{T^3} & \frac{6}{T^2}\\
\frac{6}{T^2} & -\frac{2}{T}
\end{bmatrix}\begin{bmatrix}
p_f-p_0-v_0T\\
v_f-v_0
\end{bmatrix}\nonumber\\
x^*(t) &= \begin{bmatrix}
  \frac{d_3}{6}t^3+\frac{d_2}{2}t^2 + v_0 t + x_0\\
  \frac{d_3}{2}t^2+d_2t + v_0
\end{bmatrix},\;u^*(t) = d_3t+d_2\\
C^* &= \frac{12\|p_f - p_0\|^2}{T^3} - \frac{12(v_0+v_f)\cdot(p_f-p_0)}{T^2} + \nonumber\\
&\frac{4(\|v_0\|^2+v_0\cdot v_1+\|v_1\|^2)}{T}  + \rho T
\end{align*}

Here the optimal cost $C^*$ turns out to be a polynomial function of $T$, we are able to derive the optimal $T^*$ by minimizing $C^*(T)$ as
\begin{align*}
T^* &= \argmin_{T}C^*(T)\\
s.t. \ T& \geq \bar{T}
\end{align*}
the solution of which is the positive real root of $C^*(T)' = 0$. Furthermore, the optimal cost is $C^* = C^*(T^*)$.

\subsection{Collision Checking}
\label{sec:col}

For a calculated edge $e(t) = [p(t)^\T, v(t)^\T, a(t)^\T, ...]^\T$ in Alg.~\ref{alg:succ}, we need to check if $e(t)\subset \mathcal{X}^{free}$ for $t \in [0,\tau]$. We check collisions in the geometric space $\mathcal{P}^{free} \subset \mathbb{R}^3$ separately from enforcing the dynamic constraints $\mathcal{D}^{free} \subset \mathbb{R}^{3(n-1)}$. The edge $e(t)$ is valid only if its geometric shape $p(t) \subset \mathcal{P}^{free}$ and derivatives $(v(t), a(t), ...) \subset \mathcal{D}^{free}$, i.e.,
\begin{equation}
(v,a, ...)\subset \mathcal{D}^{free}   \Leftrightarrow\left. \begin{array}{ll}
         \|v\|_{\infty} \leq v_{max}, & \forall t\in[0, \tau]\\
         \|a\|_{\infty} \leq a_{max}, & \forall t\in[0, \tau]\\
         \vdots&\end{array}
         \right.
\end{equation}
Since the derivatives $v, a, ...$ are polynomials, we calculate their extrema within the time period $[0, \tau]$ to compare with maximum bounds on velocity, acceleration, etc. For $n \leq 3$, the order of these polynomials is less than $5$, which means we can easily solve for the extrema in closed form.

The more challenging part is checking collisions in $\mathcal{P}^{free}$. In this work, we model $\mathcal{P}$ as an \textit{Occupancy Grid Map}. Other representations such as a \textit{Polyhedral Map} are also possible but these are usually hard to obtain from real-world sensor data~\cite{liu2016high,deits2015computing} and out of the scope of the discussion in this paper. Let $P := \{p(t_i)\;|\; t_i \in [0,\tau], i = 0,\ldots, I\}$ be a set of positions that the system traverses along the trajectory $p(t)$. To ensure a collision-free trajectory, we just need to show that $p(t_i) \in \mathcal{P}^{free}$ for all $i \in \crl{0,\ldots, I}$. Given a polynomial $p(t),\; t \in [0,\tau]$, the positions $p(t_i)$ are sampled by defining:
\begin{equation}
 t_i := \frac{i}{I}\tau \quad\text{ such that }\quad \frac{\tau}{I}v_{max} \geq {R}.
\end{equation}
Here $R$ is the occupancy grid resolution. The condition ensures that the maximum distance between two consecutive samples will not exceed the map resolution. It is an approximation, since it can miss cells that are traversed by $p(t)$ with a portion of the curve within the cell shorter than $R$, but it prevents the trajectory from hitting obstacles.



\section{Trajectory Refinement}
\label{sec: ref}
A trapezoid velocity profile is widely used to describe the robot following a path, in which the robot is assumed to move as a particle that exactly tracks the path with defined velocity function. This model gives the so-called \textit{time allocation} for a large group of trajectory optimization approaches described in~\cite{mellingerICRA2011},~\cite{richter2016polynomial},~\cite{liu2016high},~\cite{liu2017plan} and~\cite{chen2016online}. However, this approximation is naive and the resulting trajectory significantly deforms from the given path since the modeled particle is not obeying the expected dynamics.

In above section, we proposed the complete solution for planning a trajectory that is valid in control space. The resulting trajectory gives not only the collision-free path, but also the time for reaching those waypoints. Thus, we are able to use it as a prior to generate a smoother trajectory in higher dimension for controlling the actual robot. The refined trajectory $x^*(t)$ is derived from solving an unconstrained QP with given initial and end states $s_0, s_g$ and the intermediate waypoints $p_k, k \in \crl{0,\ldots,N-1}$.

\begin{equation}
  \label{eq: problem4}
  \begin{gathered}
    \min_{D} \; \sum_{k = 0}^{N-1}\int_{0}^{\tau_k} \left\|p_{Dk}^{(n)}(t)\right\|^2 dt\\
    \begin{aligned}
      \text{s.t.}\;&x_0(0) = s_0,\quad x_{N-1}(\tau_{N-1}) = s_g\\
      &x_{k+1}(0) = x_k(\tau_k),\; k \in \crl{0,\ldots, N-2}\\
      &p_{Dk}(\tau_k) = p_k, \qquad k \in \crl{0,\ldots, N-1}
    \end{aligned}
  \end{gathered}
\end{equation}

The time for each trajectory segment $\tau_k$ is also given from the prior trajectory. The solution for Eq.~\eqref{eq: problem4} is proposed in~\cite{mellingerICRA2011}. We ignore the mathematical details in this section and only show the trajectory refinement results in Fig.~\ref{fig: compare}.

 \begin{figure}[htp]
  \centering
  \subfigure[$T = 8.5$.]{\includegraphics[width=0.45\columnwidth, trim=0 0 0 0, clip=true]{./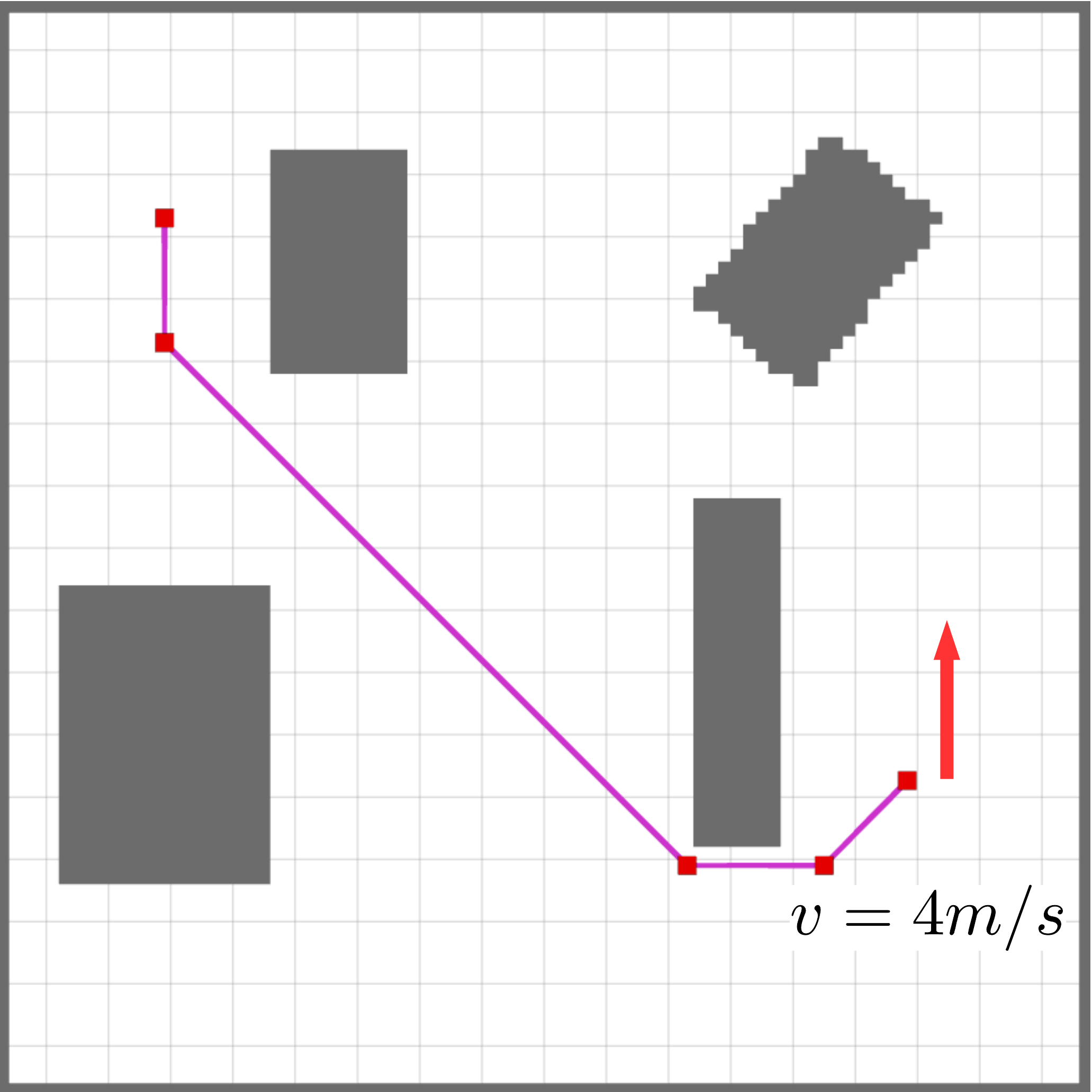}\label{fig: traj_pos1}}
  \subfigure[$T = 8.5, J = 296.6$.]{\includegraphics[width=0.45\columnwidth, trim=0 0 0 0, clip=true]{./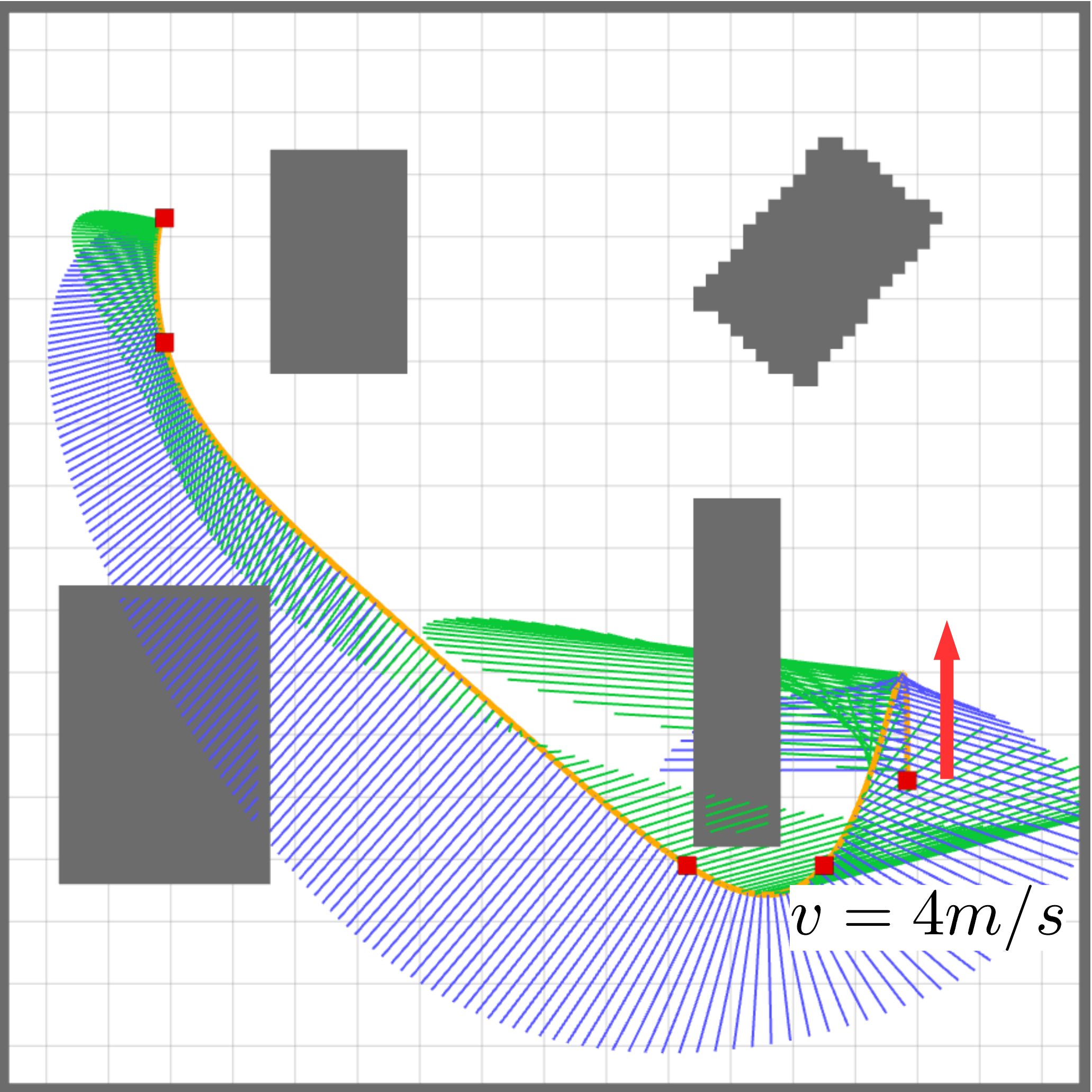}\label{fig: traj_pos2}}
  \subfigure[$T = 10, J = 14.0$.]{\includegraphics[width=0.45\columnwidth, trim=0 0 0 0, clip=true]{./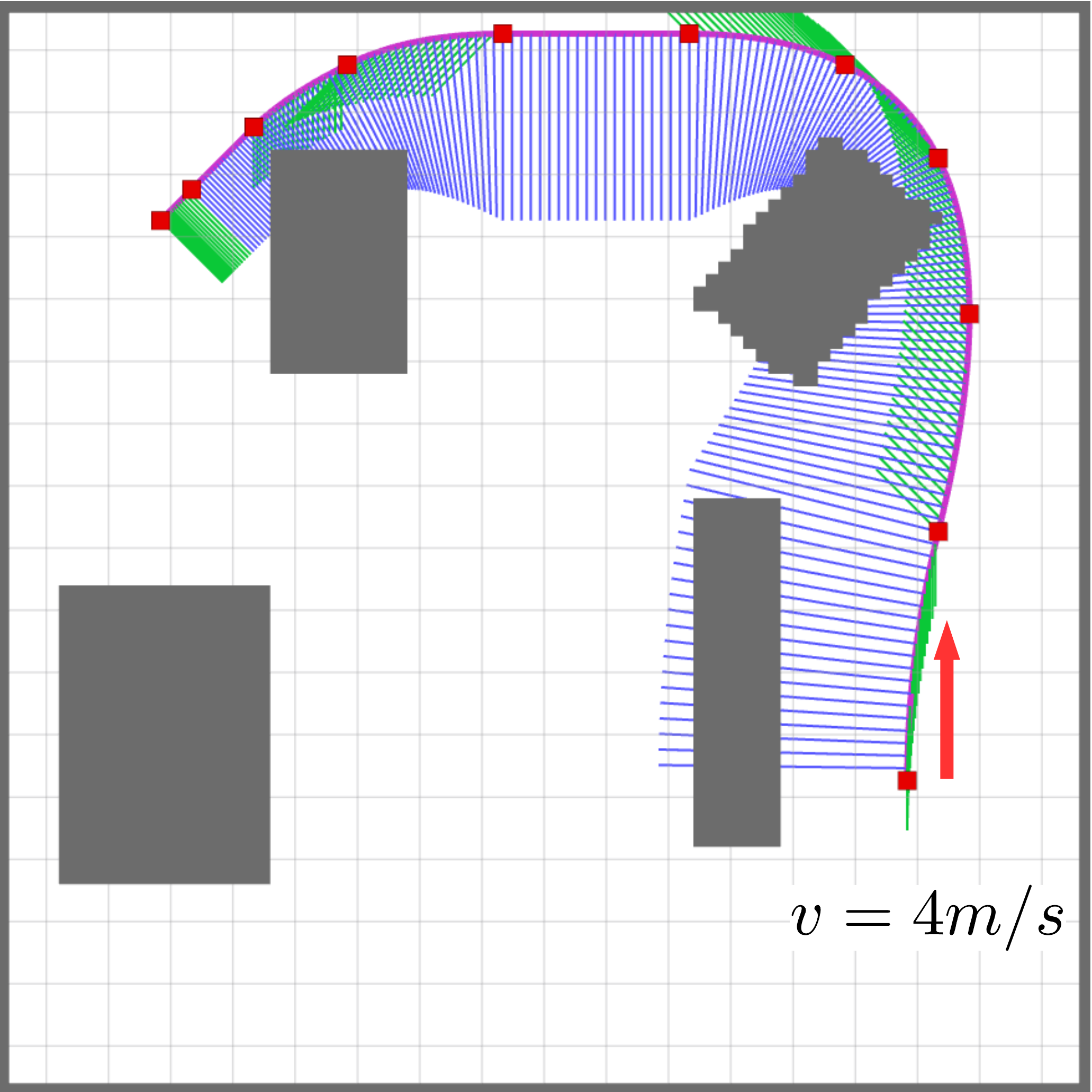}\label{fig: traj_vel1}}
  \subfigure[$T = 10, J = 21.3$.]{\includegraphics[width=0.45\columnwidth, trim=0 0 0 0, clip=true]{./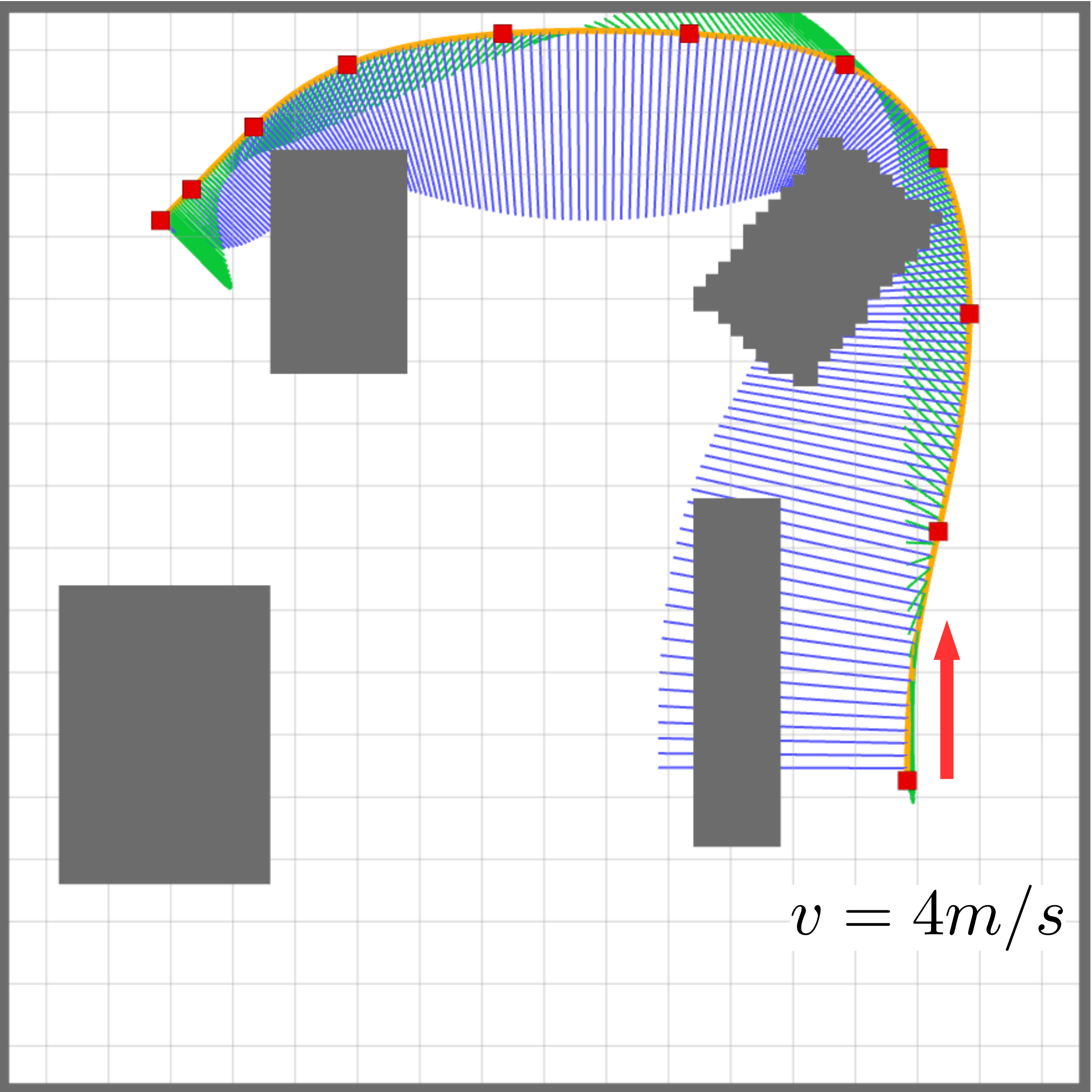}\label{fig: traj_vel2}}
  \subfigure[$T = 12, J = 11.3$.]{\includegraphics[width=0.45\columnwidth, trim=0 0 0 0, clip=true]{./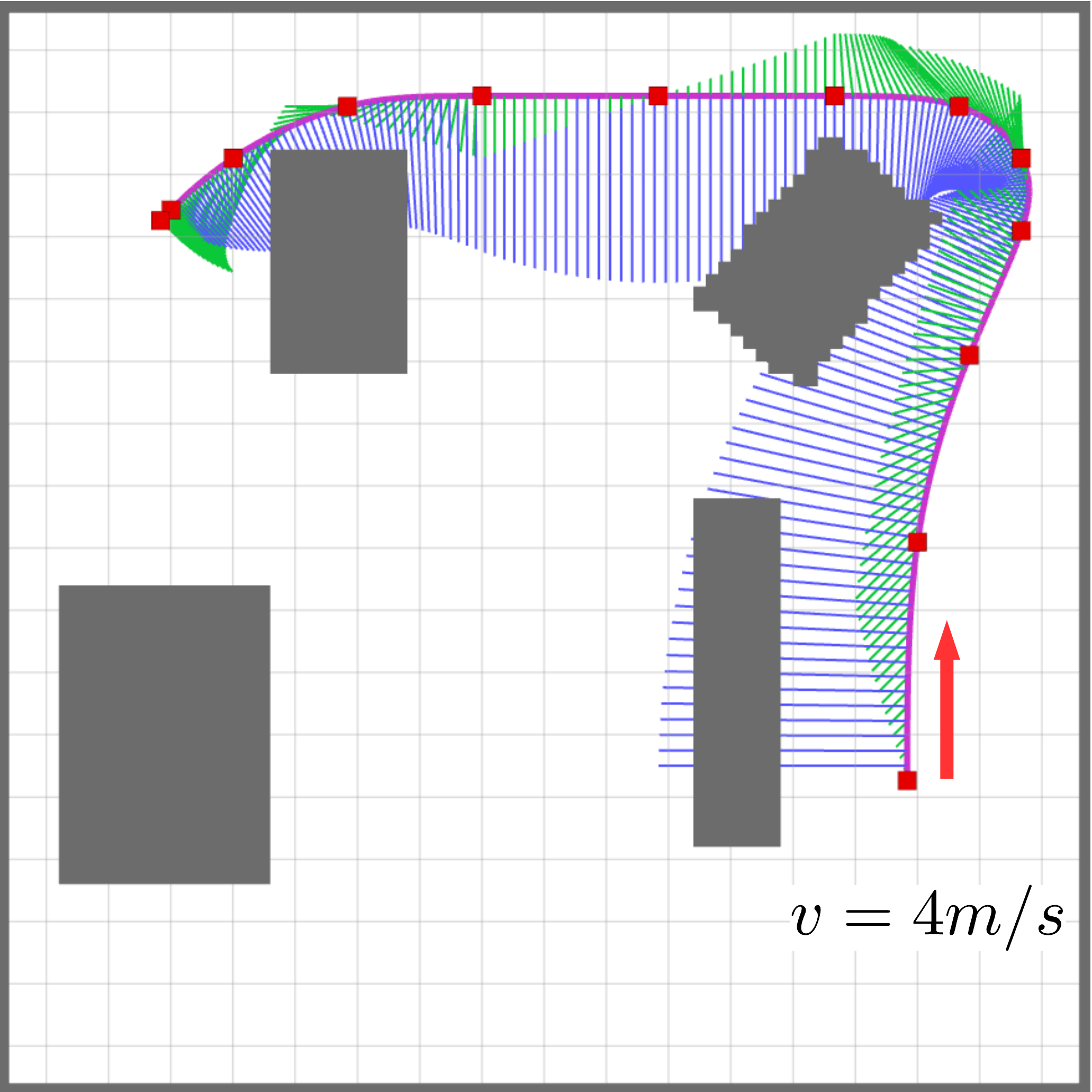}\label{fig: traj_acc1}}
  \subfigure[$T = 12, J = 13.6$.]{\includegraphics[width=0.45\columnwidth, trim=0 0 0 0, clip=true]{./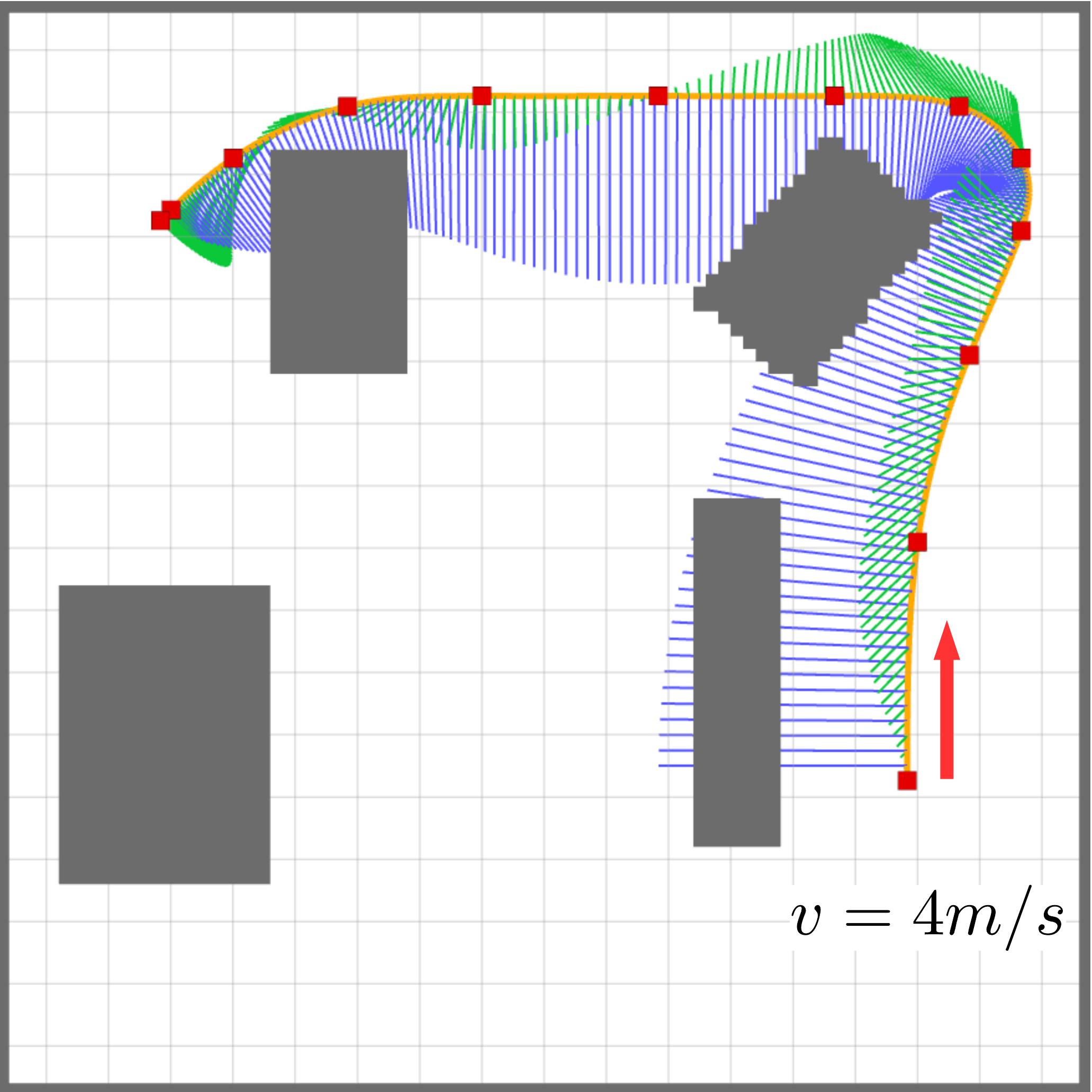}\label{fig: traj_acc2}}
   \caption{Trajectories planned from start $s$ to goal $g$ with initial velocity ($4m/s$). The blue/green lines show the speed/acceleration along trajectories respectively and the red points are the intermediate waypoints. (a) shows the shortest path. The time is allocated using the trapezoid velocity profile for generating min-jerk trajectory in (b). The resulting trajectory has a large cost for efforts $J$. (c) shows the trajectory planned using acceleration-controlled system. In this case, the acceleration is not continuous. In (d), we refine using a min-jerk trajectory which has continuous and smooth acceleration. (e) shows the trajectory planned using jerk-controlled system. The acceleration is continuous but not smooth. In (f), the refined min-jerk trajectory has continuous and smooth acceleration. \label{fig: compare}}
\end{figure}
It needs to be notified that even though the refinement step produces a smoother trajectory, the refined trajectory might be unsafe and infeasible. 

\section{Experimental Results}
\label{sec:exp}
\subsection{Heuristic Function}
We proposed two different heuristics in Sec.~\ref{sec:heur}: denote the first one that estimates the minimum time using the max speed constraint as $h_1$; denote the other one estimates the minimum cost function using the dynamic constraints as $h_2$. The heuristic $h_1$ is easier to compute, but it fails to take in to account of the system's dynamics; the heuristic $h_2$ requires to solve for the real roots of a polynomial, but it reveals the lower bound of the cost regarding system's dynamics and thus it is a tighter underestimation of the actual cost. Here we compare the performance of the algorithm with respect to the two heuristics $h_1, h_2$. As a reference, by setting the heuristic function to zero changes the algorithm into Dijkstra search. Fig.~\ref{fig:hs} visualizes the expanded nodes while searching towards the goal from a state with initial velocity $3m/s$ in positive vertical direction.

\begin{figure}[htp]
  \centering
  \subfigure[Dijkstra. $T_p = 0.16s, N_p = 2707$]{\includegraphics[width=0.32\columnwidth, trim=0 0 0 0, clip=true]{./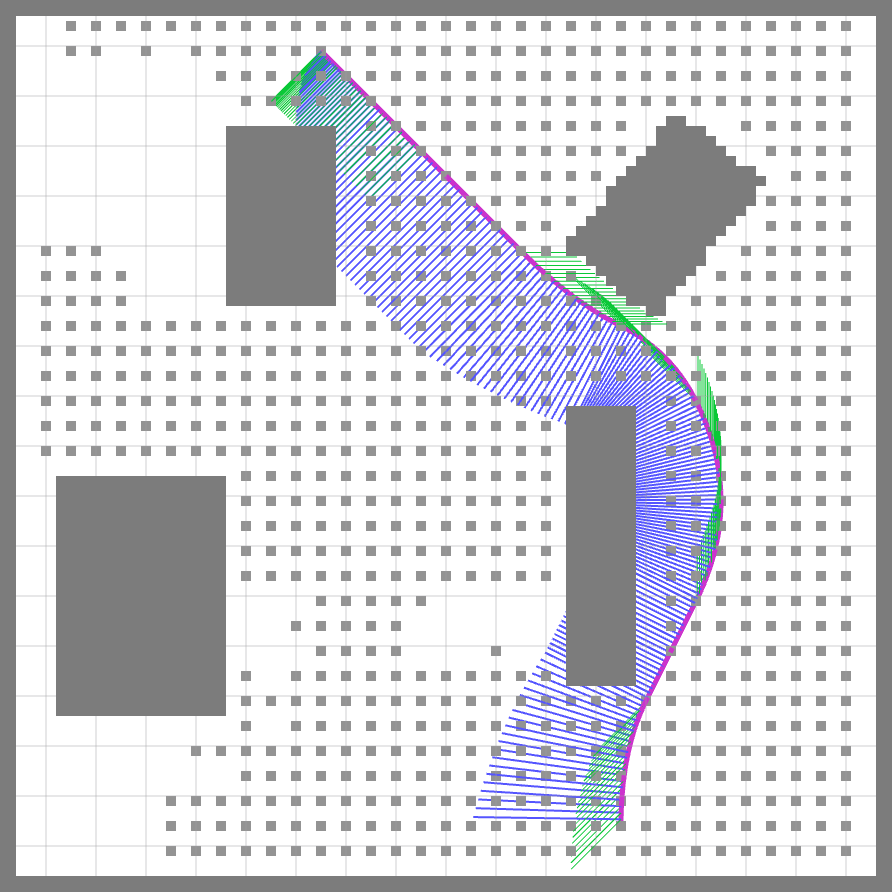}\label{fig: h0}}
  \subfigure[$A^*$ with $h_1$. $T_p = 0.064s, N_p = 1282$]{\includegraphics[width=0.32\columnwidth, trim=0 0 0 0, clip=true]{./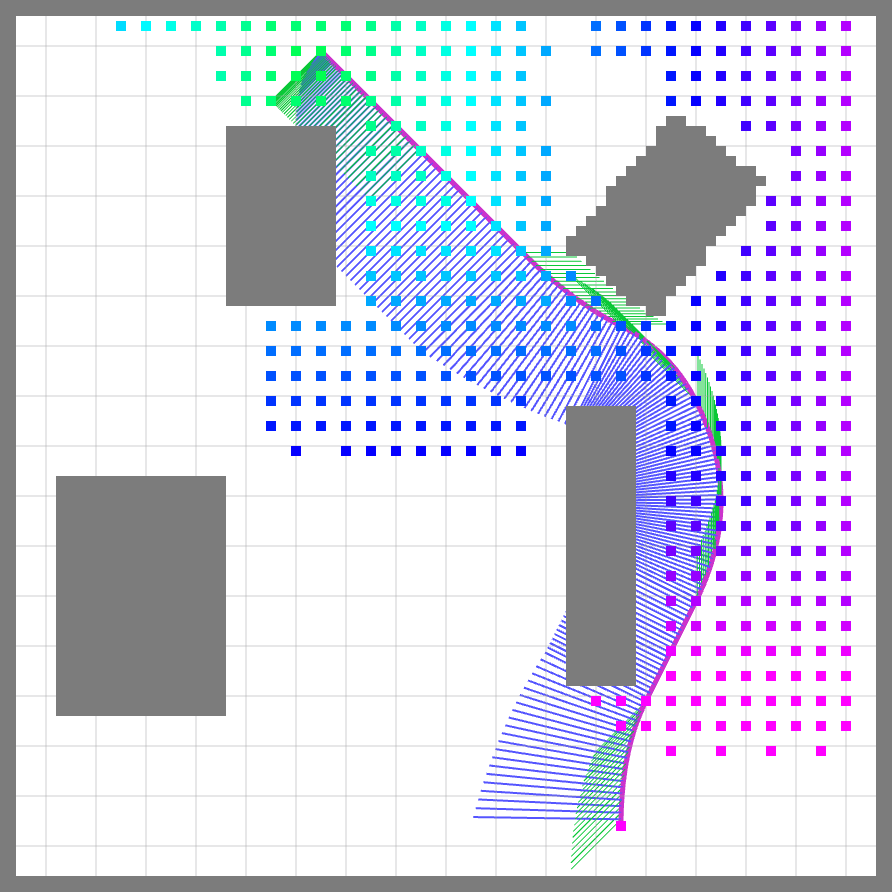}\label{fig: h1}}
  \subfigure[$A^*$ with $h_2$. $T_p = 0.016s, N_p = 376$]{\includegraphics[width=0.32\columnwidth, trim=0 0 0 0, clip=true]{./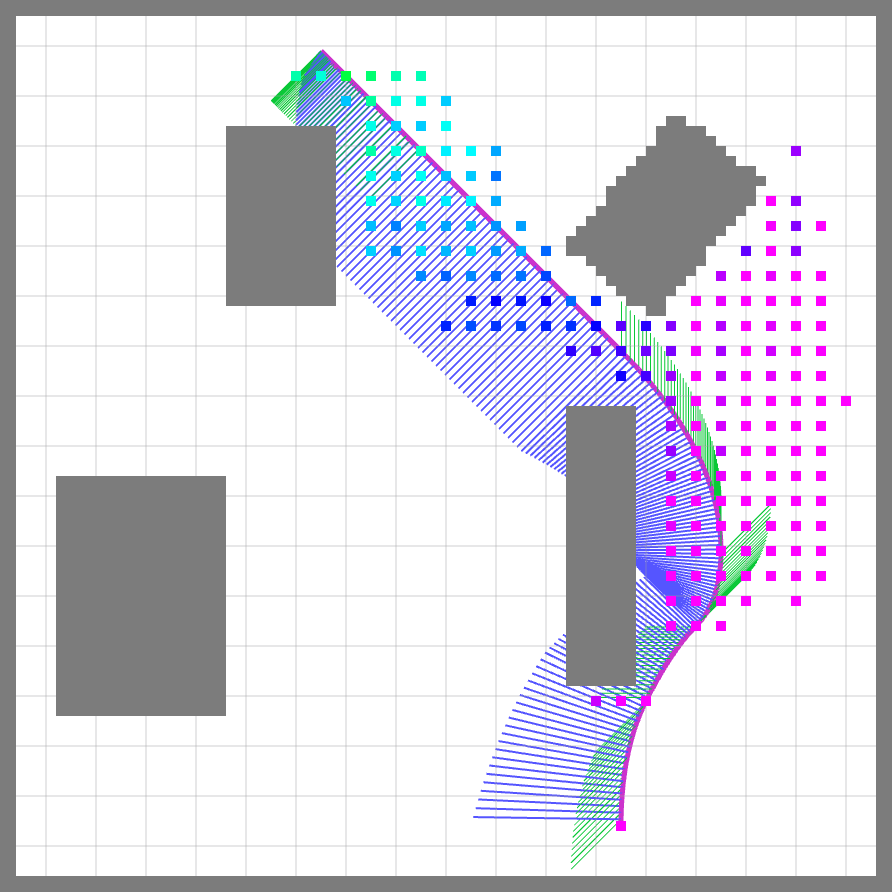}\label{fig: h2}}
   \caption{Generated trajectories using different heuristics. The expanded nodes (small dots) are colored by the corresponding cost value of the heuristic function. Grey nodes have zero heuristic value, high cost nodes are colored red while low cost nodes are colored green. $T_p$ and $N_p$ shows the time for planning and number of expanded nodes respectively. \label{fig:hs}}
\end{figure}


We can see that the \textit{Minimum Cost Heuristic} $h_2$ makes the searching faster as it expands less nodes without loss of optimality. However, when it comes to the system with higher dimension, calculating $h_2$ becomes harder as one can not analytically find the roots for a polynomial with order greater than $4$. As claimed in Sec.~\ref{sec:heur}, when the maximum velocity is low, $h_1$ is efficient enough for any dynamic system.

\subsection{Run Time Analysis}
To evaluate the computational efficiency of the algorithm, we record the run time of generating hundreds of trajectories (Fig.~\ref{fig: maps}) using either acceleration-controlled or jerk-controlled system in both 2-D and 3-D environments. Table~\ref{table:time2} shows the time it takes for each system. We can see that planning in 3-D takes more time than in 2-D; also, planning in jerk space is much slower ($10$ times) than in acceleration space.
\begin{figure}[htp]
  \centering
  \subfigure[2-D Planning.]{\includegraphics[width=0.45\columnwidth, trim=0 0 0 0, clip=true]{./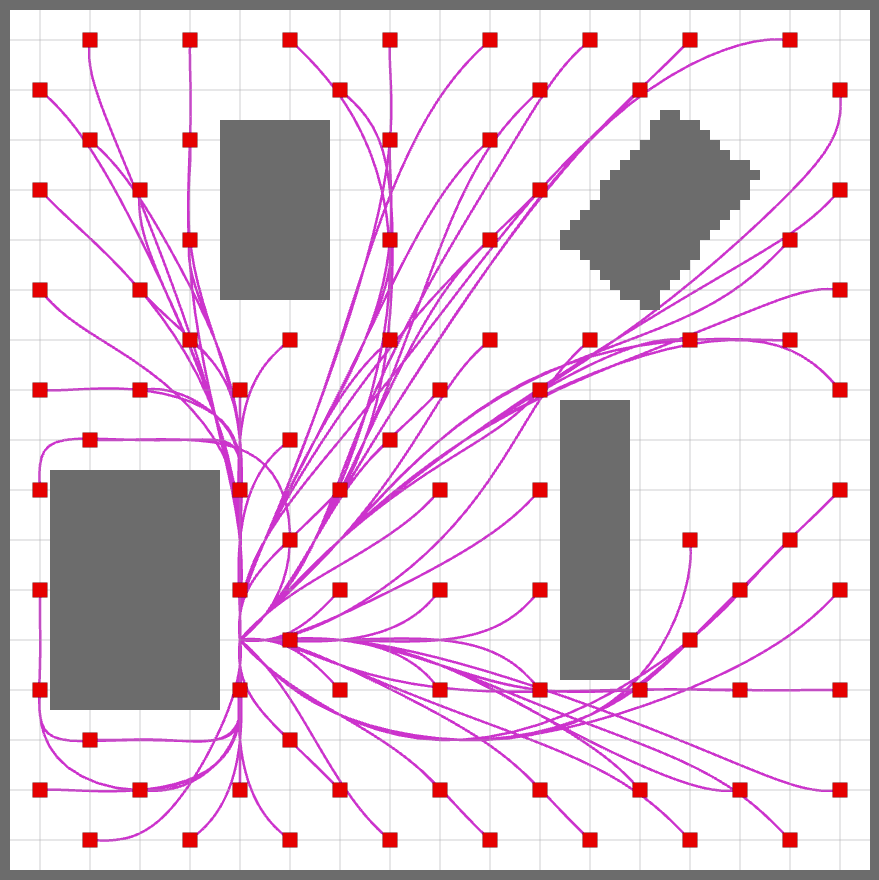}\label{fig: m1}}
  \subfigure[3-D Planning.]{\includegraphics[width=0.45\columnwidth, trim=0 0 0 0, clip=true]{./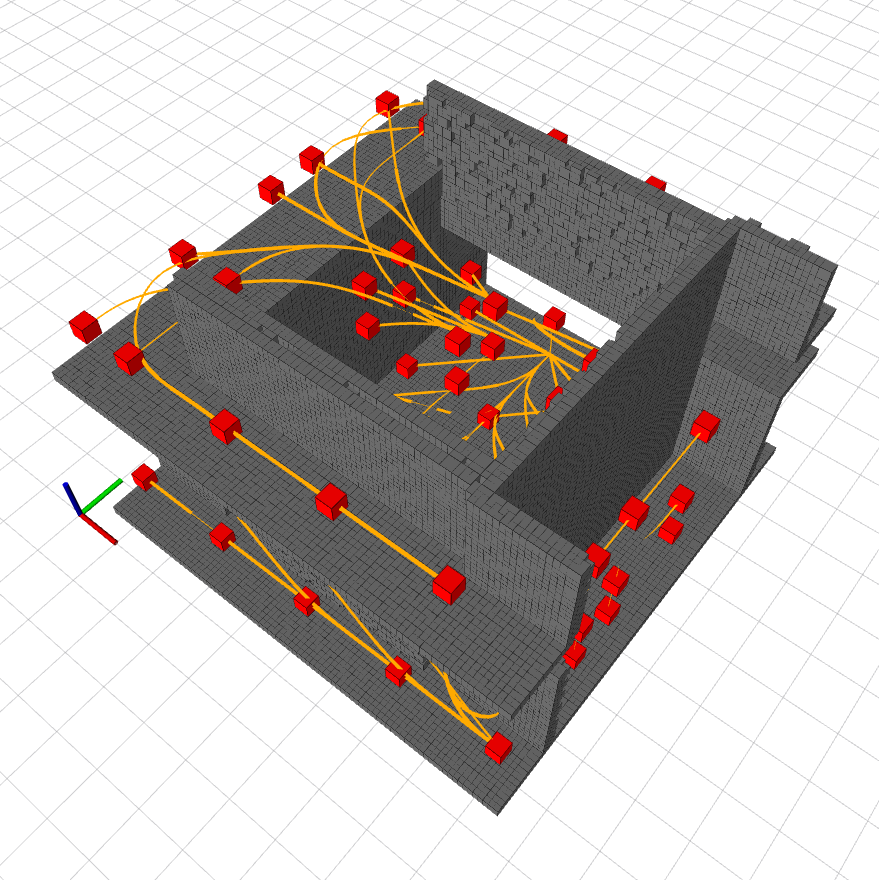}\label{fig: m2}}
   \caption{Trajectories generated to sampled goals (small red balls). For 2-D case, we use 9 primitives while for 3-D case, the number is 27. \label{fig: maps}}
\end{figure}

\begin{table}[htb]
\centering
  \caption{Trajectory Generation Run Time} \label{table:time2}
	\begin{tabular}{ c | c | c | c }
	\hline
	Map & Time(s) & Accel-controlled & Jerk-controlled\\\hline
	\multirow{3}{*}{2-D} & Avg & 0.016 & 0.147\\
	 &  Std & 0.015 & 0.282 \\
	 &  Max & 0.086 & 2.13\\\hline
	 \multirow{3}{*}{3-D} & Avg & 0.094 & 2.98\\
	 &  Std & 0.155 & 3.78 \\
	 & Max & 0.515 & 9.50\\\hline
	\end{tabular}
\end{table}


\subsection{Re-planning and Comparisons}
\textit{Receding Horizon Control} (RHC) has been widely used for navigating an aerial vehicle in unknown environments~\cite{howACC02}, the frequently re-planning process allows the robot to keep moving with limited sensing range until it reaches the goal region. In this section, we show results of our navigation system that builds on the RHC framework with the proposed trajectory generation method. As a comparison, we also set up the system that utilizes the prior planned path as the guide for trajectory generation. To demonstrate the fully autonomous collision avoidance on a quadrotor, we use the AscTec Pelican platform with a Hokuyo laser range-finder. We run state estimation and obstacle detection (mapping) as described in~\cite{shenICRA2012} on an onboard Intel NUC-i7 computer. Fig.~\ref{fig: exps} shows the performance of using these two approaches to avoid an obstacle by re-planning at the circle position where the desired speed is non-zero. The traditional path-based approach in Fig.~\ref{fig: exp1} leads to a sharp turn while our approach generates a smoother trajectory shown in Fig.~\ref{fig: exp2}.

\begin{figure}[htp]
  \centering
   \subfigure[Experiment environment.]{\includegraphics[width=0.9\columnwidth, trim=0 0 0 0, clip=true]{./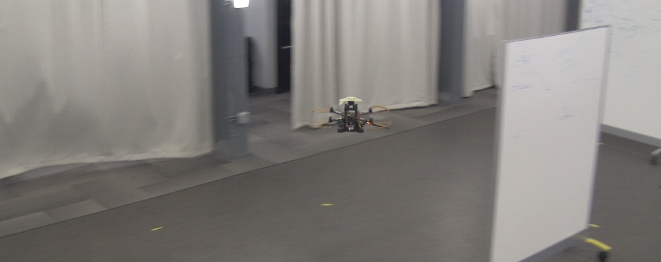}\label{fig: exp0}}
  \subfigure[Re-plan with path-based approach.]{\includegraphics[width=0.9\columnwidth, trim=0 0 0 0, clip=true]{./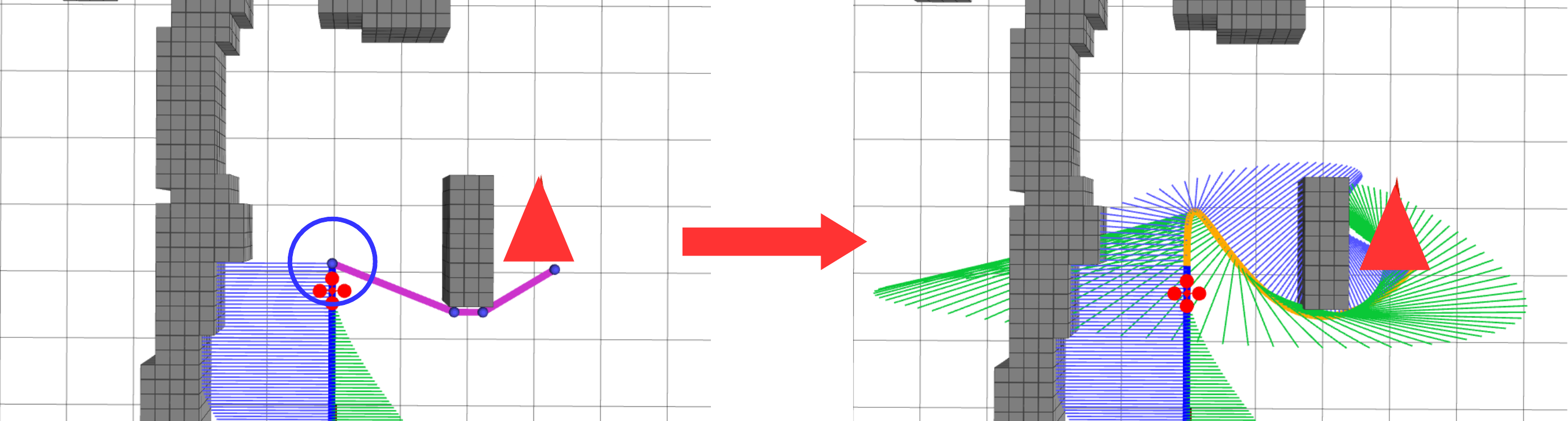}\label{fig: exp1}}
  \subfigure[Re-plan with our method.]{\includegraphics[width=0.9\columnwidth, trim=0 0 0 0, clip=true]{./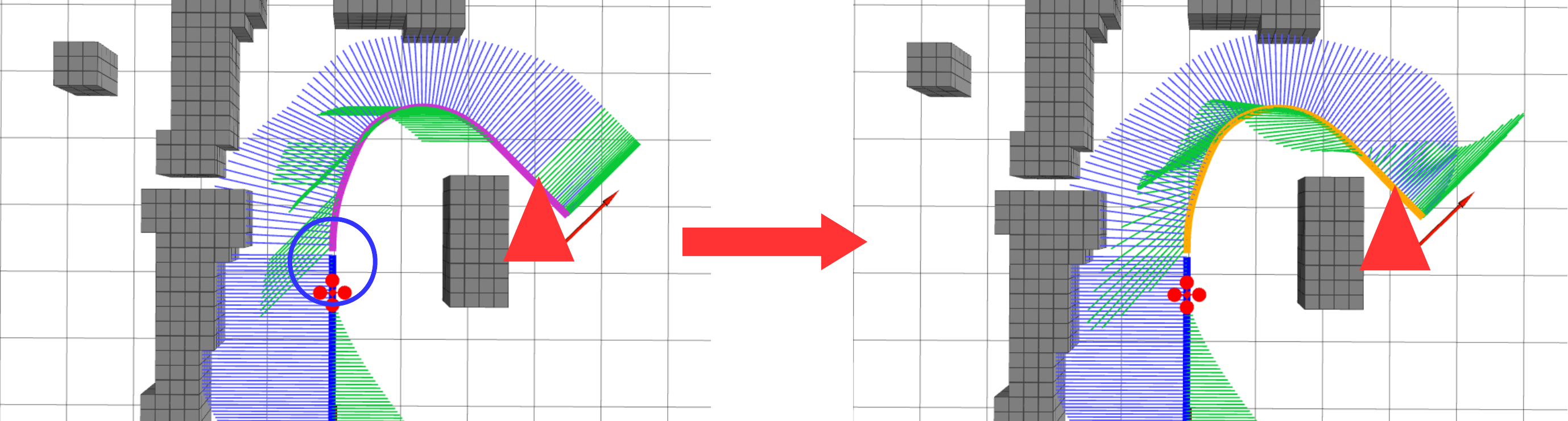}\label{fig: exp2}}
   \caption{Pelican experiments using different trajectory generation pipelines. The robot is initially following a trajectory (blue curve) and needs to re-plan at the end of this prior trajectory (circled) to go to the goal (red triangle). The state from which the robot re-plans is non-static and the speed is $2m/s$ in positive vertical direction. (b) shows the result of using traditional path-based trajectory generation method, the shortest path (purple line segments in the left figure) leads to the final trajectory (yellow curve in the righ figure); (c) shows the result of using our trajectory generation method, the shortest trajectory (purple curve in the left figure) leads to the smoother final trajectory (yellow curve in the righ figure). \label{fig: exps}}
\end{figure}

Fig.~\ref{fig:ts} shows the results in simulation where we set up a longer obstacle-cluttered corridor for testing. The re-planning is triggered constantly at $3Hz$ and the maximum speed is set to be $3m/s$. Our method generates a better overall trajectory compared to the traditional method as it avoids sharp turns when avoiding obstacles.

\begin{figure}[htp]
  \centering
  \subfigure[Simulation Environment.]{\includegraphics[width=0.9\columnwidth, trim=0 0 0 0, clip=true]{./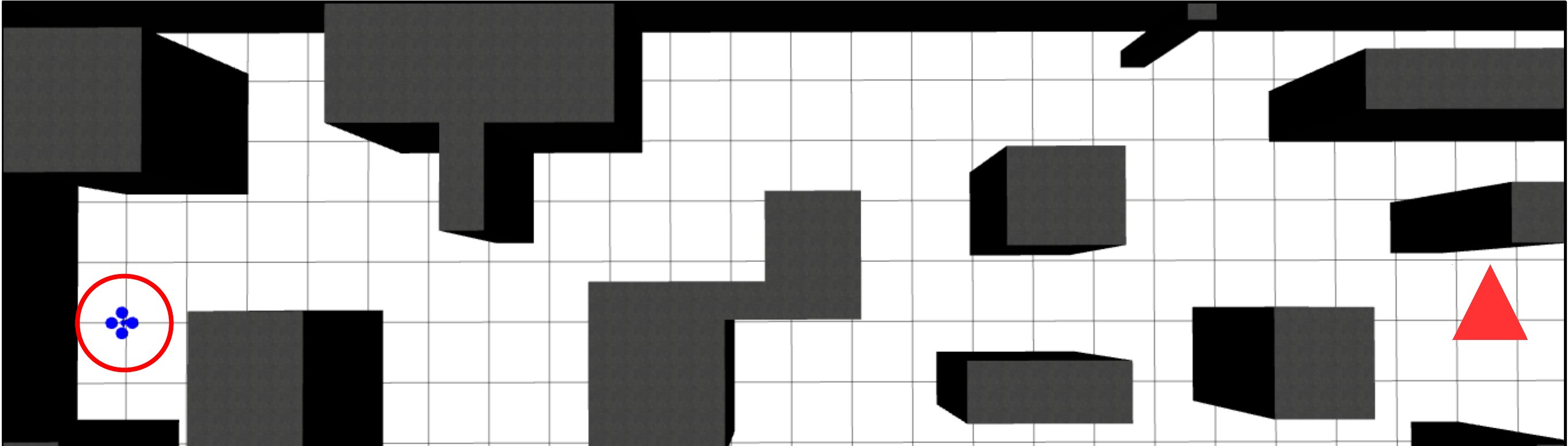}\label{fig: env}}
  \subfigure[Path-based approach.]{\includegraphics[width=0.48\columnwidth, trim=0 0 0 0, clip=true]{./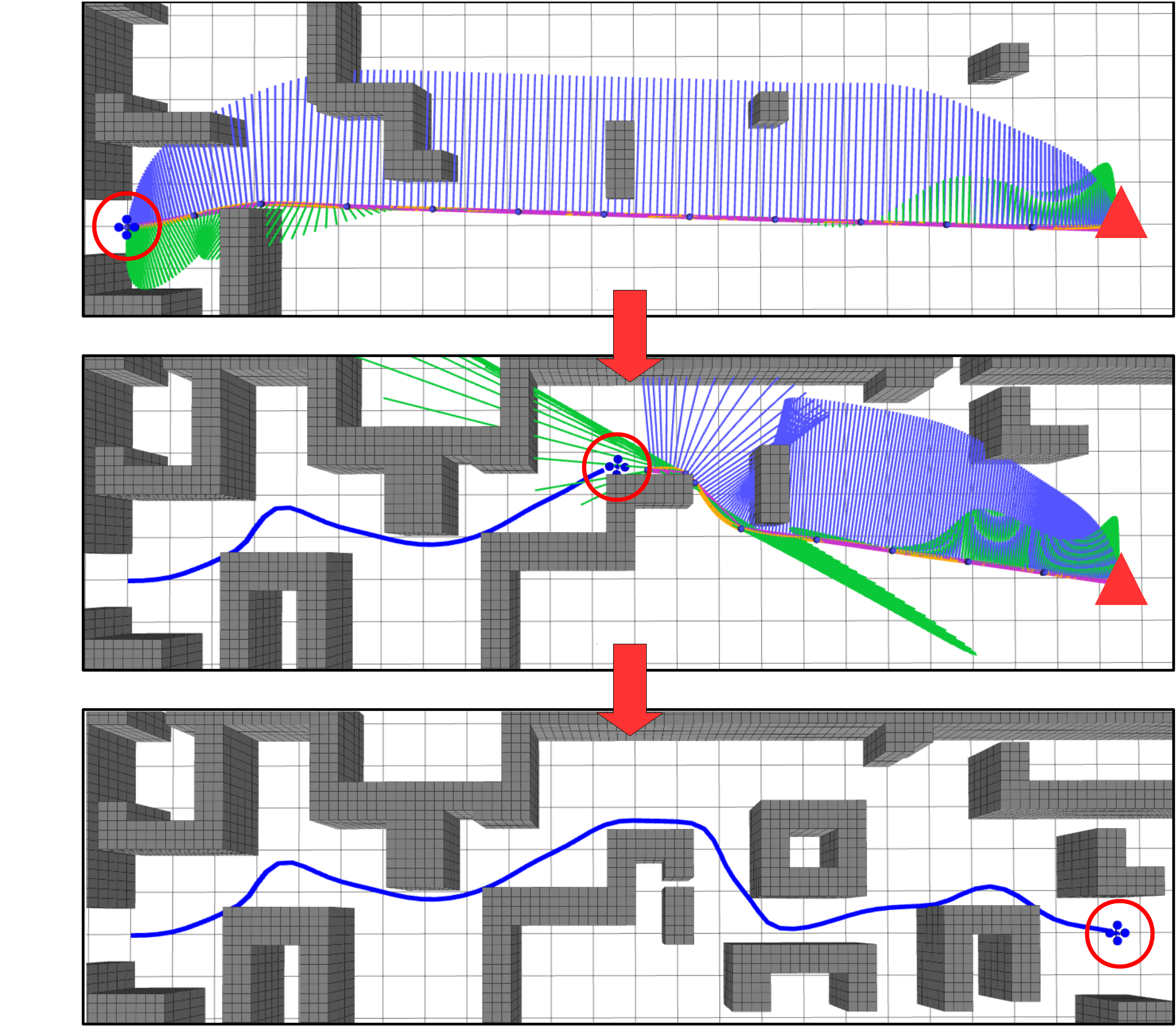}\label{fig: to1}}
  \subfigure[Our method.]{\includegraphics[width=0.48\columnwidth, trim=0 0 0 0, clip=true]{./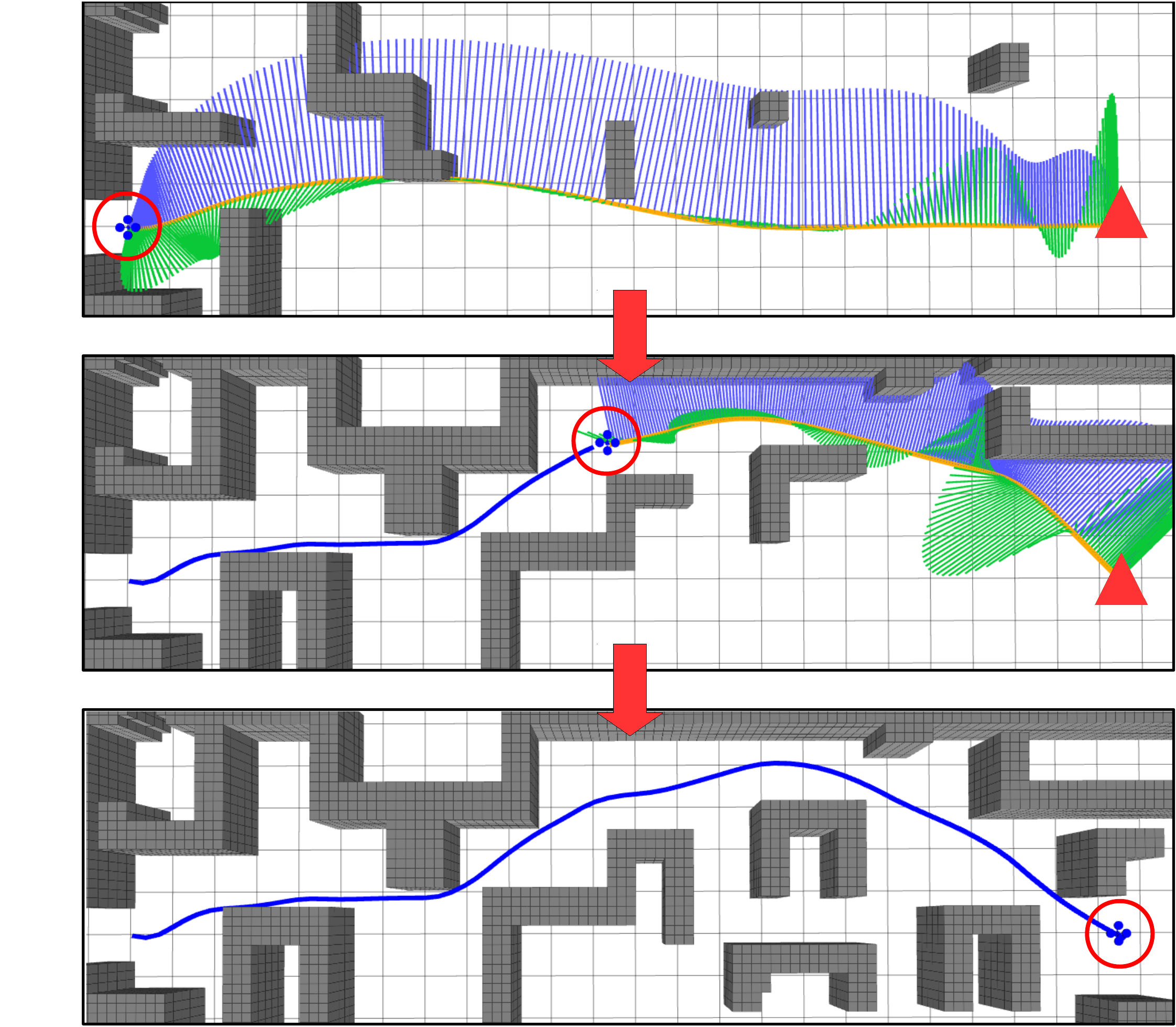}\label{fig: tn1}}
   \caption{Re-planning with RHC in simulation using different trajecotry generation pipelines. The robot starts from the left (circled) and the goal is at the right side of the map (red triangle). Blue curves show the traversed trajectory. (b) shows the re-planning processes using traditional path-based trajectory generation method. (c) shows the re-planning processes using proposed method in this paper. We can see that the overall trajectories in (c) is smoother than in (b). \label{fig:ts}}
\end{figure}


\section{Conclusion}
\label{sec: con}
Search-based planning is well-known to be inefficient for high dimensional planning due to the large number of nodes to expand. Even though lattice search techniques with motion primitives have been explored for ground vehicles, it is still a hard problem to consider the system's dynamics in planning phase. Using ideas from optimal control, we propose a solution that plan optimal trajectories in high dimensional spaces within a reasonable time. The experimental results reveal the success of using it as the foundation for a safe and fast navigation system for a quadrotor. The deterministic optimal trajectory helps in reducing errors in state estimation and control, saving system energy and making robot's motion predictable. We believe the basic approach proposed in this paper is valuable for planning optimal trajectories for any system that is differential flat, moreover, this generic framework can be integrated with other path planning technique like sampling-based methods to generate trajectories.

\appendices
\section{}
\label{app:a}
\begin{proof}[Proof of Prop.~\ref{prop:discretization}]
  Given an initial state $x_0$ and a sequence of $k$ inputs, $u_1,\ldots,u_k$, are applied each for time $\tau$. The final state after applying the $k$ inputs is given by,
  \begin{equation*}
    \begin{gathered}
      x(k\tau) = F^{k}(\tau) x_0 + \sum_{i = 0}^{k-1} F^{i}(\tau) G(\tau) u_{k-i}\\
      \begin{aligned}
        F^{k}(\tau) &= \brl{\begin{smallmatrix}
          \mathbf{I}_3 & k \tau \mathbf{I}_3 & \cdots & \frac{(k \tau)^{n-1}}{(n-1)!} \mathbf{I}_3\\
          \mathbf{0} & \mathbf{I}_3 & \cdots & \frac{(k \tau)^{n-2}}{(n-2)!} \mathbf{I}_3\\
          \vdots& \ddots&\ddots&\vdots\\
          \mathbf{0}& \cdots & \mathbf{I}_3 & k \tau \mathbf{I}_3\\
          \mathbf{0}& \cdots & \mathbf{0} & \mathbf{I}_3\\
        \end{smallmatrix}} \\
        F^{i}(\tau)G(\tau) &= \brl{\begin{smallmatrix}
          \brl{(i+1)^n - i^n} \frac{\tau^n}{n!}\mathbf{I}_3\\
          \brl{(i+1)^{n-1} - i^{n-1}} \frac{\tau^{n-1}}{(n-1)!}\mathbf{I}_3\\
          \vdots \\
          \brl{(i+1)^2 - i^2} \frac{\tau^2}{2!} \mathbf{I}_3\\
          \tau \mathbf{I}_3
        \end{smallmatrix}}
      \end{aligned}
    \end{gathered}
  \end{equation*}
  Our discretized inputs are of the form $u_i = d_u \kappa_i$ where $\kappa \in \mathbb{Z}^3$ leading to $x(k\tau)$ being of the form
  \begin{equation*}
    x(k\tau) = F^{k}(\tau) x_0 + \brl{\begin{smallmatrix}
      \prl{\sum_{i = 0}^{k-1} \brl{(i+1)^n - i^n} \kappa_{k-i}} d_u \frac{\tau^n}{n!}\\
      \prl{\sum_{i = 0}^{k-1} \brl{(i+1)^{n-1} - i^{n-1}} \kappa_{k-i}} d_u \frac{\tau^{n-1}}{(n-1)!}\\
      \vdots \\
      \prl{\sum_{i = 0}^{k-1} \kappa_{k-i}} d_u \tau
    \end{smallmatrix}}
  \end{equation*}
  Thus we can see that each term in the expression for $x(k\tau)$ is a variable integer times a constant which means that our state space is discretized due to discretization of the inputs.
\end{proof}

\section{}
\label{app:b}
\begin{proof}[Proof of Prop.~\ref{prop:2}]
Since the trajectory connecting $s_i$ and $s_j$ is collision-free by construction of the graph $\mathcal{G}$ (see Alg.~\ref{alg:succ}), the optimal control from $s_i$ to $s_j$ according to the cost function in \eqref{eq:problem1} has the form prescribed by Prop.~\ref{prop:LQMT}. In detail
\[
\delta_\tau = s_j - F(\tau)s_i = G(\tau) u_{ij}
\]
and the optimal control is:
\begin{align*}
u^*(t) &= B^\T e^{A^\T(\tau-t)} W_\tau \delta_\tau\\
&= B^\T e^{A^\T(\tau-t)} \left(\int_0^\tau e^{As} BB^\T e^{A^\T s} ds\right)^{-1}\int_0^\tau e^{As}ds Bu_{ij}
\end{align*}
Since only the bottom $3 \times 3$ block of $B$ is non-zero and since the matrix $e^{A^\T(\tau-t)} \left(\int_0^\tau e^{As} BB^\T e^{A^\T s} ds\right)^{-1}\int_0^\tau e^{As}ds$ has its bottom-right $3\times 3$ block equal to $I_{3 \times 3}$, we get:
\[
B^\T e^{A^\T(\tau-t)} \left(\int_0^\tau e^{As} BB^\T e^{A^\T s} ds\right)^{-1}\int_0^\tau e^{As}ds B = I_{3 \times 3}
\]
which implies that $u^*(t) \equiv u_{ij}$.
\end{proof}


\bibliographystyle{bib/IEEEtran}
\bibliography{bib/references}

\begin{thebibliography}{10}
\providecommand{\url}[1]{#1}
\csname url@samestyle\endcsname
\providecommand{\newblock}{\relax}
\providecommand{\bibinfo}[2]{#2}
\providecommand{\BIBentrySTDinterwordspacing}{\spaceskip=0pt\relax}
\providecommand{\BIBentryALTinterwordstretchfactor}{4}
\providecommand{\BIBentryALTinterwordspacing}{\spaceskip=\fontdimen2\font plus
\BIBentryALTinterwordstretchfactor\fontdimen3\font minus
  \fontdimen4\font\relax}
\providecommand{\BIBforeignlanguage}[2]{{%
\expandafter\ifx\csname l@#1\endcsname\relax
\typeout{** WARNING: IEEEtran.bst: No hyphenation pattern has been}%
\typeout{** loaded for the language `#1'. Using the pattern for}%
\typeout{** the default language instead.}%
\else
\language=\csname l@#1\endcsname
\fi
#2}}
\providecommand{\BIBdecl}{\relax}
\BIBdecl

\bibitem{mellingerICRA2011}
D.~Mellinger and V.~Kumar, ``Minimum snap trajectory generation and control for
  quadrotors,'' in \emph{Proceedings of the 2011 IEEE International Conference
  on Robotics and Automation (ICRA)}, 2011.

\bibitem{hehn2011quadrocopter}
M.~Hehn and R.~D'Andrea, ``Quadrocopter trajectory generation and control,''
  \emph{IFAC Proceedings Volumes}, vol.~44, no.~1, 2011.

\bibitem{mueller2015}
M.~Mueller, M.~Hehn, and R.~D'Andrea, ``A computationally efficient motion
  primitive for quadrocopter trajectory generation,'' \emph{IEEE Trans. on
  Robotics (T-RO)}, vol.~31, no.~6, pp. 1294--1310, 2015.

\bibitem{bouktir2008}
Y.~Bouktir, M.~Haddad, and T.~Chettibi, ``Trajectory planning for a quadrotor
  helicopter,'' in \emph{16th Mediterranean Conference on Control and
  Automation}, 2008.

\bibitem{Jamieson2016}
J.~Jamieson and J.~Biggs, ``Near minimum-time trajectories for quadrotor uavs
  in complex environments,'' in \emph{IEEE/RSJ Int. Conf. on Intelligent Robots
  and Systems (IROS)}, 2016, pp. 1550--1555.

\bibitem{mellingerICRA2012}
D.~Mellinger, A.~Kushleyev, and V.~Kumar, ``Mixed-integer quadratic program
  trajectory generation for heterogeneous quadrotor teams,'' in
  \emph{Proceedings of the 2012 IEEE International Conference on Robotics and
  Automation (ICRA)}, 2012.

\bibitem{deitsICRA2015}
R.~Deits and R.~Tedrake, ``Efficient mixed-integer planning for uavs in
  cluttered environments,'' in \emph{Proceedings of the 2015 IEEE International
  Conference on Robotics and Automation {(ICRA)}}, 2015.

\bibitem{richter2016polynomial}
C.~Richter, A.~Bry, and N.~Roy, ``Polynomial trajectory planning for aggressive
  quadrotor flight in dense indoor environments,'' in \emph{Robotics
  Research}.\hskip 1em plus 0.5em minus 0.4em\relax Springer, 2016, pp.
  649--666.

\bibitem{liu2016high}
S.~Liu, M.~Watterson, S.~Tang, and V.~Kumar, ``High speed navigation for
  quadrotors with limited onboard sensing,'' in \emph{2016 IEEE International
  Conference on Robotics and Automation (ICRA)}.\hskip 1em plus 0.5em minus
  0.4em\relax IEEE, 2016.

\bibitem{liu2017plan}
S.~Liu, M.~Watterson, K.~Mohta, K.~Sun, S.~Bhattacharya, C.~J. Taylor, and
  V.~Kumar, ``Planning dynamically feasible trajectories for quadrotors using
  safe flight corridors in 3-d complex environments,'' \emph{IEEE Robotics and
  Automation Letters}, vol.~2, no.~3, pp. 1688--1695, July 2017.

\bibitem{chen2016online}
J.~Chen, T.~Liu, and S.~Shen, ``Online generation of collision-free
  trajectories for quadrotor flight in unknown cluttered environments,'' in
  \emph{2016 IEEE International Conference on Robotics and Automation
  (ICRA)}.\hskip 1em plus 0.5em minus 0.4em\relax IEEE, 2016, pp. 1476--1483.

\bibitem{likhachev2009planning}
M.~Likhachev and D.~Ferguson, ``Planning long dynamically feasible maneuvers
  for autonomous vehicles,'' \emph{The International Journal of Robotics
  Research}, vol.~28, no.~8, pp. 933--945, 2009.

\bibitem{macallister2013path}
B.~MacAllister, J.~Butzke, A.~Kushleyev, H.~Pandey, and M.~Likhachev, ``Path
  planning for non-circular micro aerial vehicles in constrained
  environments,'' in \emph{Robotics and Automation (ICRA), 2013 IEEE
  International Conference on}.\hskip 1em plus 0.5em minus 0.4em\relax IEEE,
  2013, pp. 3933--3940.

\bibitem{pivtoraikoICRA2013}
M.~Pivtoraiko, D.~Mellinger, and V.~Kumar, ``Incremental micro-uav motion
  replanning for exploring unknown environments,'' in \emph{Proceedings of the
  2013 IEEE International Conference on Robotics and Automation (ICRA)}, 2013,
  pp. 2452--2458.

\bibitem{mellinger2012trajectory}
D.~W. Mellinger, ``Trajectory generation and control for quadrotors,'' Ph.D.
  dissertation, University of Pennsylvania, 2012.

\bibitem{LQMT}
E.~Verriest and F.~Lewis, ``On the linear quadratic minimum-time problem,''
  \emph{IEEE Transactions on Automatic Control}, vol.~36, no.~7, pp. 859--863,
  1991.

\bibitem{lewisBook}
F.~Lewis and V.~Syrmos, \emph{Optimal control}.\hskip 1em plus 0.5em minus
  0.4em\relax John Wiley \& Sons, 1995.

\bibitem{OptimalControlBook}
D.~Bertsekas, \emph{{Dynamic Programming and Optimal Control}}.\hskip 1em plus
  0.5em minus 0.4em\relax Athena Scientific, 1995.

\bibitem{ara_star}
M.~Likhachev, G.~Gordon, and S.~Thrun,
  ``\href{http://papers.nips.cc/paper/2382-ara-anytime-a-with-provable-bounds-on-sub-optimality.pdf}{ARA*
  : Anytime A* with Provable Bounds on Sub-Optimality},'' in \emph{Advances in
  Neural Information Processing Systems}, 2004, pp. 767--774.

\bibitem{Lavalle98rapidly-exploringrandom}
S.~M. Lavalle, ``Rapidly-exploring random trees: A new tool for path
  planning,'' Tech. Rep., 1998.

\bibitem{rrt-star}
S.~Karaman and E.~Frazzoli, ``Sampling-based algorithms for optimal motion
  planning,'' \emph{The International Journal of Robotics Research}, vol.~30,
  no.~7, pp. 846--894, 2011.

\bibitem{pivtoraiko2009differentially}
M.~Pivtoraiko, R.~A. Knepper, and A.~Kelly, ``Differentially constrained mobile
  robot motion planning in state lattices,'' \emph{Journal of Field Robotics},
  vol.~26, no.~3, pp. 308--333, 2009.

\bibitem{hart1968formal}
P.~E. Hart, N.~J. Nilsson, and B.~Raphael, ``A formal basis for the heuristic
  determination of minimum cost paths,'' \emph{IEEE Transactions on Systems
  Science and Cybernetics}, vol.~4, no.~2, pp. 100--107, 1968.

\bibitem{arslan2013use}
O.~Arslan and P.~Tsiotras, ``Use of relaxation methods in sampling-based
  algorithms for optimal motion planning,'' in \emph{IEEE Int. Conf. on
  Robotics and Automation (ICRA)}.\hskip 1em plus 0.5em minus 0.4em\relax IEEE,
  2013, pp. 2421--2428.

\bibitem{mintime_acc_constraint}
D.~Feng and B.~Krogh, ``Acceleration-constrained time-optimal control in n
  dimensions,'' \emph{IEEE Transactions on Automatic Control}, vol.~31, no.~10,
  pp. 955--958, 1986.

\bibitem{deits2015computing}
R.~Deits and R.~Tedrake, ``Computing large convex regions of obstacle-free
  space through semidefinite programming,'' in \emph{Algorithmic Foundations of
  Robotics XI}.\hskip 1em plus 0.5em minus 0.4em\relax Springer, 2015, pp.
  109--124.

\bibitem{howACC02}
J.~Bellingham, A.~Richards, and J.~P. How, ``Receding horizon control of
  autonomous aerial vehicles,'' in \emph{Proceedings of the 2002 American
  Control Conference (ACC)}, vol.~5, 2002, pp. 3741--3746.

\bibitem{shenICRA2012}
S.~Shen, N.~Michael, and V.~Kumar, ``Autonomous indoor 3d exploration with a
  micro-aerial vehicle,'' in \emph{IEEE International Conference on Robotics
  and Automation (ICRA)}, May 2012, pp. 9--15.

\end{thebibliography}

\end{document}